\def\BibTeX{{\rm B\kern-.05em{\sc i\kern-.025em b}\kern-.08em
    T\kern-.1667em\lower.7ex\hbox{E}\kern-.125emX}}
\newcommand{\iss}{\mathrm{ViTScore}}
\newcommand{\ours}{\texttt{ViTScore}\xspace}
\newcommand{\ms}{\rm{MS-SSIM}\xspace}
\newcommand{\lp}{\rm{LPIPS}\xspace}
\newcommand{\psnr}{\rm{PSNR}\xspace}
\newtheorem{definition}{Definition}
\newtheorem{theorem}{Theorem}
\newcommand{\vit}{\mathrm{ViT}}
\newcommand{\ie}{\textit{i.e.,~}}
\begin{document}
\title{How to Evaluate Semantic Communications for Images with \ours Metric?}

\author{Tingting~Zhu,
Bo~Peng,
Jifan~Liang,
Tingchen~Han,
Hai~Wan{*},
Jingqiao~Fu,
and~Junjie~Chen
\thanks{The authors are with the School of Computer Science and Engineering and the Guangdong Key Laboratory of Information Security Technology, Sun Yat-sen University, Guangzhou 510006, China (e-mail: \{zhutt, pengb8, liangjf56, hantch\}@mail2.sysu.edu.cn, \{wanhai, fujq3\}@mail.sysu.edu.cn, chenjj376@mail2.sysu.edu.cn). \\
\textit{*Corresponding author: Hai Wan.}}
}

\markboth{Journal of \LaTeX\ Class Files.}
{Zhu \MakeLowercase{\textit{et al.}}: How to Evaluate Semantic Communications for Images with \ours Metric}

\maketitle

\begin{abstract}
  \noindent \emph{Semantic communications}~(SC) have been expected to be a new paradigm shifting to catalyze the next generation communication, whose main concerns shift from accurate bit transmission to effective semantic information exchange in communications. However, the previous and widely-used metrics for images are not applicable to evaluate the image semantic similarity in SC. Classical metrics to measure the similarity between two images usually rely on the pixel level or the structural level, such as the PSNR and the MS-SSIM. Straightforwardly using some tailored metrics based on deep-learning methods in CV community, such as the LPIPS, is infeasible for SC. To tackle this, inspired by BERTScore in NLP community, we propose a novel metric for evaluating image semantic similarity, named \emph{Vision Transformer Score}~(ViTScore). We prove theoretically that ViTScore has 3 important properties, including \emph{symmetry, boundedness,} and \emph{normalization}, which make ViTScore convenient and intuitive for image measurement. To evaluate the performance of ViTScore, we compare ViTScore with 3 typical metrics~(PSNR, MS-SSIM, and LPIPS) through 4 classes of experiments: (i) correlation with BERTScore through evaluation of image caption downstream CV task, (ii) evaluation in classical image communications, (iii) evaluation in image semantic communication systems, and (iv) evaluation in image semantic communication systems with semantic attack. Experimental results demonstrate that ViTScore is robust and efficient in evaluating the semantic similarity of images. Particularly,  ViTScore outperforms the other 3 typical metrics in evaluating the image semantic changes by semantic attack, such as image inverse with Generative Adversarial Networks~(GANs). This indicates that ViTScore is an effective performance metric when deployed in SC scenarios. Furthermore, we provide an ablation study of ViTScore to demonstrate the structure of the ViTScore metric is valid.
  \looseness=-1
\end{abstract}

\begin{IEEEkeywords}
  \noindent
  Image semantic similarity, 
  pre-trained foundation model, 
  semantic communications,
  semantic measurement, 
  Vision transformer~(ViT), 
  ViTScore metric.
\end{IEEEkeywords}

\section{Introduction}

\IEEEPARstart{H}{igher} bit-level precision and fewer bit-level transmission cost are the core concerns of classical transmission. Recently, a flurry of research rethinks these two major interests and investigates the concept and potential of semantics in modern and/or future communications. Shannon and Weaver (1949) once categorized communications into three levels: transmission of information symbols, semantic exchange, and effectiveness of semantic exchange~\cite{shannon1949mathematical}. This indicates that transmitting the semantics, instead of the bits or symbols, may achieve higher system efficiency. 
As presented in~\cite{shannon1949mathematical, Lu2022RethinkingSem,hoydis2021toward, tong2022nine}, {\em semantic communications}~(SC) focus on the transmission of semantic features and the performance improvement at the semantic level, which is expected to be a new paradigm shifting to catalyze the next generation communication. 
SC have attracted growing interest from both academia and industry~(see~\cite{tong2022nine,gunduz2022beyond,sana2022learning,farshbafan2022common,Bockelmann,SongKangXu,JankowskiMikolaj,hu2022robust,liu2021semantics,dai2022nonlinear,zhang2023predictive,Jialong, Bourtsoulatze, XieHuiqiang,huang2021deep,xie2021task,Huiqiang2022Qin,guler2018semantic,zhang2022context,yao2022semantic,seo2023semantics} and the references therein).
However, the pioneering works on SC generally focus on the design of new schemes for SC systems.
Till now, the related works concerning the performance evaluation metric for SC are still in their infancy. Especially, the lack of desirable evaluation metrics for image semantics is highlighted.
\looseness=-1

In traditional image transmission systems, the most convincing metric used to evaluate image information is based on the bit or pixel level, which encourages an exact recurrence of what has been sent from the transmitter~\cite{Lu2022RethinkingSem}.
Classical metrics, such as \textit{mean square error}~(MSE) and \textit{Peak Signal-to-Noise Ratio}~(PSNR), usually take into account the bit (or symbol) or pixel error rate. 
These approaches treat each bit or pixel as equally important, which is in fact unnecessary.
At the same time, they cannot describe the human perceptual judgments of images and fulfill the requirements of desirable intelligent task performance~\cite{li2022region}.
Moreover, the {\em Structural Similarity Index Metric}~(SSIM) and the Multi-Scale SSIM~(MS-SSIM) are sensitive in measuring variability at the level of image luminance, contrast and structure, but they are also not precise in measuring image high-level semantics. 
Thus, it is essential to exploit perceptual-centric evaluation metrics, especially for high inference accuracy in the {\textit{artiﬁcial intelligence}}~(AI) technology based SC scenarios. As the visual similarity~(also referred to as perceptual similarity) could be measured by the distance in feature space~\cite{blau2018perception}, it is necessary for the metric's ability to accurately represent the semantic feature of images.
However, directly using the metrics based on deep-learning~(DL-based) methods in {\em computer vision}~(CV) community, such as the {\em Learned Perceptual Image Patch Similarity}~(LPIPS), is infeasible for image SC~\cite{zhang2018unreasonable}.
Since LPIPS uses the mean pooling of feature distances from corresponding locations to evaluate the similarity of two images. 
As the features are from a bunch of convolution layers, the constraint of corresponding positions leads to the absence of global semantic measures. 
Motivated by these issues, we focus on the SC evaluation metric for images global semantic similarity in this paper.
\looseness=-1

\begin{figure*}[htbp]
  \centering
  \subfloat[Original image.]{
  \includegraphics[width=0.25\textwidth]{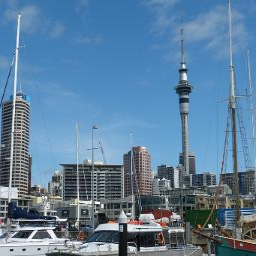}
  }
  \quad
  \subfloat[Reconstructed image with GANs-based SC system model.]{
  \includegraphics[width=0.25\textwidth]{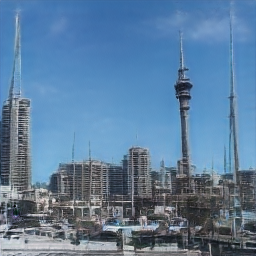}
  }
  \quad
  \subfloat[Reconstructed image with GANs-based SC system model under semantic noise interference.]{
  \includegraphics[width=0.25\textwidth]{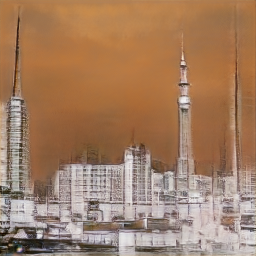}
  }
  \caption{Comparison of the image semantic similarity evaluation with 4 metrics~(\psnr, \ms, \lp and \ours\protect\footnotemark[1]).
  Transmitting an image (a) with a GANs-based SC system model, where the physical channel is supposed to be noiseless, the reconstruction performance (b) is evaluated: \psnr = 14.60, \ms = 0.72, \lp = 0.15, \ours = 0.79.
  While the semantic noise is introduced, which is characterized as the image inverse with GANs, the reconstruction performance (c) is evaluated: \psnr = 5.19, \ms = 0.00, \lp = 0.43, \ours= 0.60.
  The fluctuation in terms of the other 3 metrics is much larger than that of \ours. In fact, the semantics between images (a) and (b) (resp. (a) and (c)) is very similar. Hence, the semantic similarities between images (a) and (b) versus those of (a) and (c) are supposed to be close.}
  \label{ImgTrans}
\end{figure*}

We illustrate the evaluation performance of these image metrics with a typical example. 
It is stated that semantic noise can be classified into two types, referred to as semantic ambiguity and semantic noise coming from the adversarial examples~\cite{Qin2022SemanticCP}. 
We take the latter one as an example. 
It has shown that a variety of semantics emerges in the latent space of Generative Adversarial Networks~(GANs). Some existing works on SC~\cite{hu2023robust,han2023generative,du2023rethinking,wang2022perceptual} have exploited this property of GANs to develop the SC schemes.
We utilize a GAN-based SC system model, which will be described in detail in Section IV-D. For simplicity to the discussion of the semantic changes, the physical channel in this model is supposed to be noiseless.
Transmitted by the GAN-based SC system model with and without semantic attack, which is modeled as image inverse, an image may degrade a lot at the pixel level and the structural level, even with the DL-based metric \lp. However, it may still keep its semantics in significant measure. See Fig. \ref{ImgTrans} for reference.
From this motivation example, it is clear that a general metric for image semantic similarity is missing. 
Even the typical DL-based metric \lp cannot precisely measure the global semantics of images. To tackle this, we focus on presenting an evaluation metric that can perform well in extracting the global semantic characters of images.

\footnotetext[1]{The details about the Vision Transformer Score~(\ours) proposed in this paper will be described in Section III.}

Inspired by the state-of-the-art metric BERTScore~\cite{DBLP:conf/iclr/ZhangKWWA20} in the {\em natural language processing}~(NLP)  community and pre-training approach in foundation models~\cite{bommasani2021opportunities}, we propose a novel evaluation metric for image SC, named \textit{Vision Transformer Score}~(\ours), which measures the image semantic similarity based on the pre-trained image model \textit{Vision Transformer}~(ViT)~\cite{DBLP:conf/iclr/DosovitskiyB0WZ21}. ViT demonstrates a strong ability to integrate global semantic information thanks to the self-attention mechanism~\cite{DBLP:conf/iclr/DosovitskiyB0WZ21}.
To the best of the authors' knowledge,
\ours is the first practical image semantic similarity evaluation metric based on ViT in SC.
Specifically, our main contributions can be summarized as five folds:
\begin{itemize}
  \item The \ours metric is first proposed to measure the semantic similarity of images based on the pre-trained model ViT. To be specific, by exploiting the attention mechanism, a pre-trained image model ViT is employed in the \ours metric to learn and extract essential semantics of the image.
  \item We define \ours (see \textbf{Definition 2}), and prove theoretically that \ours has 3 important properties, including symmetry, boundedness, and normalization (see \textbf{Theorems 1, 2, and 3}, respectively). These properties make \ours convenient and intuitive for implementation in image measurement.
  \item To evaluate the performance of \ours metric, we compare \ours with 3 typical categories of metrics (\psnr, \ms, and \lp) through 4 classes of experiments: (i) correlation with BERTScore through evaluation of image caption downstream CV task, (ii) evaluation in classical image communications, (iii) evaluation in image semantic communication systems~(including 4 semantic communication models~(NTSCC~\cite{dai2022nonlinear}, DeepJSCC-V~\cite{zhang2023predictive}, ADJSCC~\cite{Jialong} and DeepJSCC~\cite{Bourtsoulatze})~\footnote[2]{The details about these 4 models will be described in Section IV. C.}) and (iv) evaluation in image semantic communication systems with semantic attack~(including 2 categories of semantic attack: semantic attack to mislead image classification task and semantic attack with image transforms (such as image inverse)). Experimental evaluations show that \ours is robust and well-performed. 
  \item Particularly, we evaluate the \ours performance with different categories of semantic attack in images SC. Numerical results show that \ours can better reflect the image semantic changes than the other 3 typical metrics, indicating that \ours has a promising performance when deployed in image SC scenarios.
  \item We provide an ablation study of \ours to demonstrate the structure of the \ours metric is valid.
\end{itemize}

The rest of this paper is structured as follows. The related works on the image semantic similarity metrics are discussed in Section II. Section III introduces the proposed \ours metric in detail. In Section IV, the metric evaluations of \ours are analyzed through 4 classes of experiments. Section V draws conclusions and discusses our future work.
\looseness=-1

\section{Related Work}

Semantic delivery of images is essential for 6G communication systems, and hence, image SC has been studied extensively in recent years~\cite{gunduz2022beyond}. However, most of the existing research evaluates the image SC system performance with the traditional image quality metrics, which are not designed to evaluate the semantics of images.
Most of the previous studies on evaluating image quality rely on the pixel level, such as $\ell_2$ distance, MSE and PSNR. These metrics usually treat each pixel of the image as equally important, which is practically unnecessary. On the other hand, the metrics based on pixels cannot well reflect the human perceptual judgments of images.
Therefore, SSIM, a class of metrics to measure images at the structural level, has emerged. There are many variants of SSIM, such as MS-SSIM, Multi-component SSIM, and Complex Wavelet SSIM.
They are sensitive in demonstrating the image luminance, contrast and structure, however, they are not precise in measuring image semantics.
With the prosperity of AI technology, specifically the rise of deep learning methods, some tailored DL-based metrics are developed in the CV community.
Nevertheless, straightforwardly using the DL-based metrics to SC systems is infeasible.
To be specific, we classify the most commonly used image quality assessment metrics into three categories.
\looseness=-1

\subsection{Pixel Level Measures}

MSE is the simplest and most widely used image quality assessment metric. Based on MSE, the PSNR is given by
\begin{equation}
    {\rm PSNR} = 10 \log_{10} \frac{{{\rm MAX}}^2}{{\rm MSE}} ({\rm dB}),
\end{equation}

\noindent where MSE $= d(x,\hat{x})$ is the mean square error between the reference image $x$ and the reconstructed image $\hat{x}$, and MAX is the maximum possible value of the image pixels\footnote[3]{All our experiments measured in terms of PSNR in this paper are conducted on 24-bit depth RGB images (8 bits per pixel per color channel), thus MAX $= 2^8 - 1 = 255$.}. 
In general, the larger the \psnr, the more similar the two images.
\psnr is an appealing metric due to that it is simple to calculate, has clear physical meanings, and is mathematically convenient in the context of optimization. But it is not very well matched to perceived visual quality~\cite{wang2004image,girod1993s,eskicioglu1995image,wang2002universal,wang2002image}.

\subsection{Structural Level Measures}

Unlike MSE and PSNR, which measure absolute error, a class of metrics focusing on human intuitive perception is presented, named SSIM.
SSIM mainly considers 3 key features of an image: luminance, contrast, and structure.
For two images $A$ and $B$, the SSIM between them is defined as

\begin{gather}
    \text{Luminance:} ~~~ l(A,B)= \frac{2 \mu_A \mu_B + c_1}{\mu_A^2 + \mu_B^2 + c_1}, \\
    \text{Contrast:} ~~~ c(A,B)= \frac{2 \sigma_A \sigma_B + c_2}{\sigma_A^2 + \sigma_B^2 + c_2},  \\
    \text{Structure:} ~~~ s(A,B)= \frac{\sigma_{AB} + c_3}{\sigma_A\sigma_B + c_3}, \\
    \text{SSIM:} ~~~ {\rm SSIM}(A,B)= [l(A,B)]^{\alpha}[c(A,B)]^{\beta}[s(A,B)]^{\gamma},
\end{gather}

\noindent where $c_i, i\in\{1,2,3\}, \alpha, \beta$ and $\gamma$ are positive constants. $\mu$ and $\sigma^2$ denote the mean intensity and variance of an image, respectively. Essentially, SSIM is a single-scale algorithm. The exact image scale depends on the viewing conditions of the user, such as the resolution of the display device, the viewing distance of the user, etc.
Therefore, the \ms algorithm has emerged to improve evaluation performance in good agreement with human perceptual judgments.
The \ms is given by
\begin{equation}
    {\rm MS\text{-}SSIM} (A,B)=  [l_M(A,B)]^{\alpha_M} \cdot \prod_{j=1}^{M}[c_j(A,B)]^{\beta_j}[s_j(A,B)]^{\gamma_j}, 
\end{equation}

\noindent where $M$ is the maximum scale of the image. For the $j$-scale, the similarity in luminance, contrast and structure of $A$ and $B$ are denoted by $l_j(A,B)$, $c_j(A,B)$, and $s_j(A,B)$, respectively.
The \ms~in dB is given by
\begin{equation}
    {\rm MS\text{-}SSIM} = - 10 \log_{10} (1-  {\rm MS\text{-}SSIM} ) ({\rm dB}).
\end{equation}

SSIM / \ms is between $0$ and $1$. Generally, the larger the SSIM / \ms, the smaller the difference between the two images at the structural level. Particularly, SSIM / \ms equals 1 if and only if the two images are exactly the same. This class of metrics well performs in measuring the image luminance, contrast and structure, but fails to evaluate for image rotation, translation, and other geometric operations.

\subsection{DL-based Measures}

Benefiting from the deep learning methods, several DL-based metrics for image quality measurement have been developed.
In~\cite{zhang2018unreasonable}, the authors show the unreasonable effectiveness of deep features as a perceptual metric, indicating that classification networks perform better than low-level metrics, such as $\ell_2$, \psnr and \ms, in evaluating the images' perceptual similarity.
\lp is the most popular one among perceptual metrics. Given an image $X$ and a deep neural network $\mathcal F$ with $l$ layers which takes $X$ as its input. The output of $i$-th layer is denoted as $Y^i_X\in\mathbb R^{H_i\times W_i\times C_i}$, where $H_i$, $W_i$ and $C_i$ are height, weight and number of channels of the $i$-th layer, respectively. Given two images $A$ and $B$, the \lp score of them is defined as
\begin{equation}
    \label{eq:LPIPS}
    \mathrm{LPIPS}(A, B) = \sum_i\frac{1}{H_iW_i}\sum_{h,w}||\mathbf{\theta^i}\odot(Y^i_{A(h, w)}-Y^i_{B(h, w)})||_2^2
\end{equation}

\noindent where $\mathbf{\theta^i}\in\mathbb R^{C_i}$ are learned parameters, and $\odot$ is the element-wise production.
Typically $\mathcal F$ is a pre-trained deep neural network with multiple convolution layers, and \lp needs to be further trained ($\mathbf{\theta^i}$ and/or other parameters in $\mathcal F$).
In general, the lower the LPIPS, the more similar the two images.
\lp uses the mean pooling of feature distances from corresponding locations to evaluate the similarity of two images, leading to an absence of global semantic measures.

\begin{table*}[!t]
  \renewcommand\arraystretch{1.5}
  \caption{\label{pros-cons} Comparison of image quality metrics within 3 typical categories.}
  \setlength{\tabcolsep}{2mm}{
    \centering
  \begin{tabular}{|c|c|c|c|}
  \hline
  {Categories} & {Metrics} & {Advantages} & {Limitations} \\ \hline \hline 
  {\makecell[c]{Pixel level\\measures}} & {\makecell[c]{$\ell_2$ distance, \\MSE, \\RMSE,\\MAE,\\PSNR}} & {\makecell[c]{1. Simple to calculate and widely used.\\2. Well perform in the MSE measurement at the pixel level.\\3. Has clear physical meanings.\\4. Mathematically convenient in the context\\of optimization~\cite{wang2004image}.}} & {\makecell[c]{1. Usually computed over all pixels in both images.\\2. Cannot reflect the structural and semantic\\information well.\\3. Poor correlation with human subjective quality\\ratings, unable to differentiate image contents~\cite{liu2017free}.\\4. Failed to match the perceived visual quality~\cite{wang2004image}.}}  \\ \hline \hline 
  {\makecell[c]{Structural level\\measures}} & {\makecell[c]{SSIM~\cite{wang2004image},\\MSSIM~\cite{wang2004image},\\MS-SSIM}} & {\makecell[c]{1. Sensitive in measuring variability at the level of\\image luminance, contrast and structure.\\2. Has the properties of symmetry, boundedness,\\and unique maximum.\\3. Used not only to evaluate but also to optimize a large\\variety of signal processing algorithms and systems~\cite{wang2009mean}.}} & {\makecell[c]{1. Not precise in measuring image semantics.\\2. Can not perform well with image rotation,\\translation, and other geometric operations.}}  \\ \hline \hline 
  {\makecell[c]{DL-based\\measures}} & {\makecell[c]{LPIPS~\cite{zhang2018unreasonable}, \\VTransE~\cite{zhang2017visual},\\FID~\cite{heusel2017gans}}} & {\makecell[c]{1. Well performs in human perceptual\\similarity judgments.\\2. Widely used in image generation, restoration,\\enhancement and super-resolution.}} & {\makecell[c]{1. Fails to precisely measure the\\global semantics of images.\\2. Depends on the selection of the training dataset.}} \\ \hline 
  \end{tabular}}
\end{table*}

Table~\ref{pros-cons} describes the advantages and limitations of these 3 typical categories of image quality metrics in detail.
We can conclude that the existing typical image quality metrics cannot meet the SC system performance evaluation requirements.
Inspired by the capacity of AI technology to process semantics, especially the advances in NLP, researchers keep a watchful eye on techniques of textual similarity metrics. Among them, the metric \textit{bilingual evaluation understudy} (BLEU) score~\cite{papineni2002bleu} is a widely used one to evaluate the quality of sentences that are translated by machine. The value of the BLEU score is a number ranging from 0 to 1. The higher the score, the better the quality of translation, i.e. the higher the similarity between two sentences. Whereas, the BLEU score only focuses on the difference between words in two sentences rather than the semantic information similarity of two sentences. To overcome this issue, a metric, named sentence similarity, has been proposed in~\cite{xie2021deep}. The metric sentence similarity is a measure closer to human judgment, defined as
\begin{equation}
	{\rm match} ( \hat{\textbf{s}}, \textbf{s}) = \frac{\textit{\textbf{B}}_{\Phi}(\textbf{s}) \cdot \textit{\textbf{B}}_{\Phi}(\hat{\textbf{s}})^{T}}{\parallel \textit{\textbf{B}}_{\Phi}(\textbf{s})\parallel \parallel \textit{\textbf{B}}_{\Phi}(\hat{\textbf{s}}) \parallel}
\end{equation}

\noindent where $\textbf{s}$ is the transmitted sentence, and $\hat{\textbf{s}}$ is the reconstructed sentence, $\textit{\textbf{B}}_{\Phi}$ is the \textit{ Bidirectional Encoder Representations from Transformers}~(BERT) based on a pre-trained language representation model with huge parameters \cite{devlin2018bert}. These approaches show robust performance and achieve a higher correlation with human judgment than that of other previous metrics in many NLP tasks.
\looseness=-1

Enlightened by the boom in pre-trained models and the advantages of metrics in the NLP community, the development of semantic similarity metrics of images is encouraging.
In the existing investigations of SC, only a handful of papers have studied the performance metrics, especially for images.
Among the few papers that consider image transmission, most proposed SC systems are based on DL (see~\cite{huang2022toward,johnson2016perceptual,wang2014learning,frome2013devise,zhang2017visual,zhang2019large} and the references therein), while none of them presents a practical semantic metric for image similarity measurement. Most existing studies on image SC systems use the \psnr, \ms or \lp as metrics to evaluate the performance of the system, which do not indeed evaluate the semantic information of the performance of the system.
Motivated by this, we focus on developing a novel metric based on a pre-trained transformer model for the SC of images in this paper.
\looseness=-1

\section{\ours Metric}

\begin{figure*}[tb]
  \centering
  \includegraphics[width=0.65\textwidth]{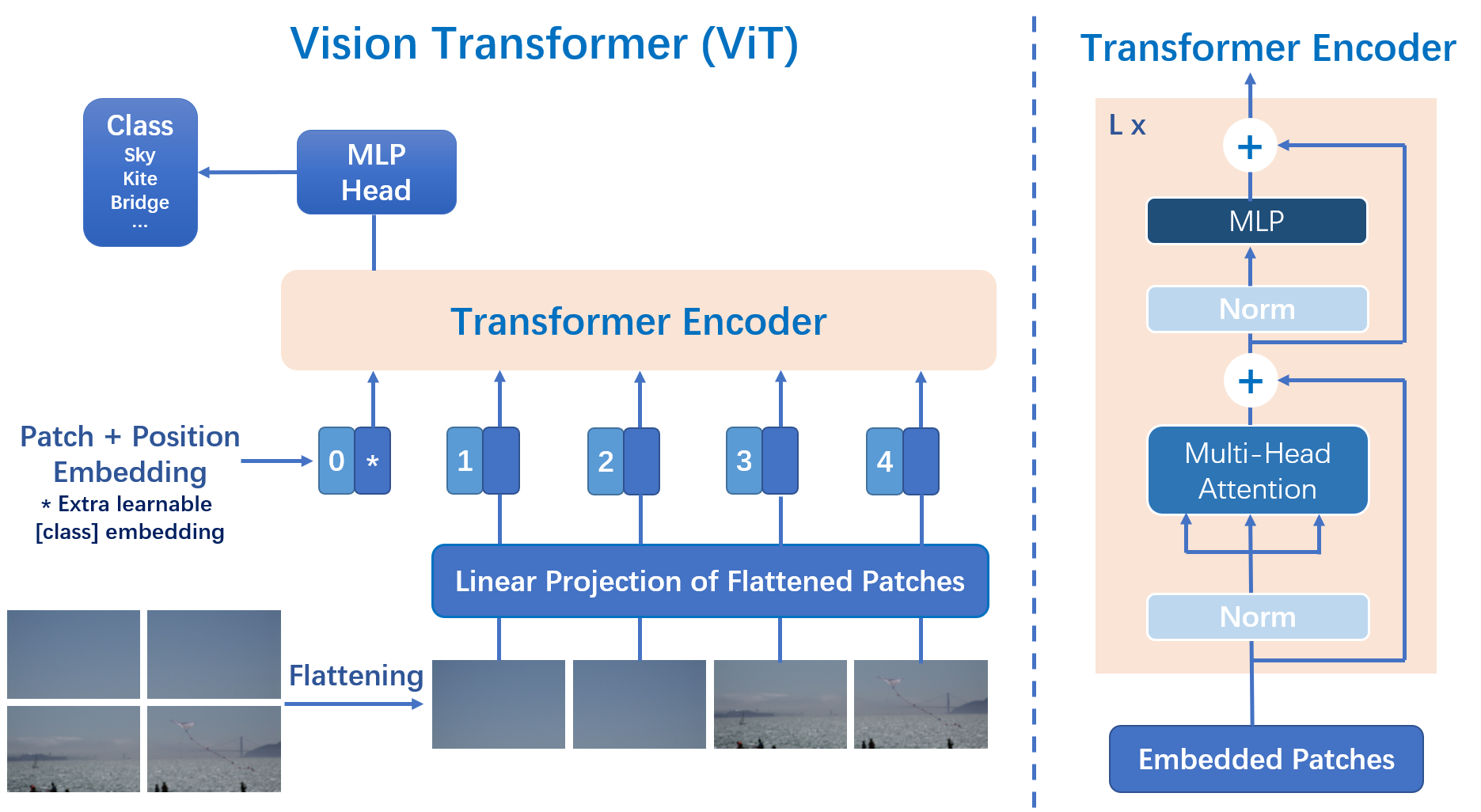}
  \caption{\label{fig:vit}The model overview of ViT\cite{DBLP:conf/iclr/DosovitskiyB0WZ21}. The input image is first split into patches with a fixed size. Then the patches are flattened and projected to the embedding space by a linear layer. The position embedding is then added, in order to keep the positional information of patches. The mixed embedding vectors are fed into an L-layer transformer encoder. Each layer of the transformer encoder is shown on the right side of this figure. The outputs of the transformer encoder are semantic features and can be further fed into an MLP classifier. As we use ViT to extract features, our implementation does not consist of the MLP head and the classification output.}
\end{figure*}

Inspired by BERTScore\cite{DBLP:conf/iclr/ZhangKWWA20}, we propose \ours to evaluate the semantic similarity between two images. \ours uses pre-trained ViT to extract the semantic features of given images, then calculates the semantic similarity using these features.
Formally, referring to~\cite{DBLP:conf/iclr/DosovitskiyB0WZ21}, we give an explicit definition of ViT.

\begin{definition}[ViT]
  \label{def:vit}
  Given an image with width of $W$, height of $H$ and $C$ channels, the ViT extracts its semantic features in an $\frac{HW}{P^2} \times N$ matrix:
  \begin{equation}
    \vit: \mathbb R^{H\times W\times C}\to\mathbb R^{\frac{HW}{P^2}\times N},
  \end{equation}
  where $\mathbb R$ is the set of real numbers, $P$ and $N$ are hyperparameters of ViT. 
  By ViT, the image is first split into multiple patches, each of which has a height of $P$ and a width of $P$. 
  Therefore the number of patches is $\frac{HW}{P^2}$.
  Then, each patch is extracted as a real vector of length $N$. 
  The model overview is shown in Fig.~\ref{fig:vit}.
  Given an image $A$ with $n$ patches, we denote $(\mathbf{a_0}, \mathbf{a_1}, \dots, \mathbf{a_{n-1}}) = \vit(A)$ as the semantic features extracted from ViT, where $\mathbf{a_i}\in\mathbb R^N$. The vectors are $\ell_2$ normalized, \ie $||\mathbf{a_i}||_2^2=1$.
  \looseness=-1
\end{definition}

ViT is proposed as a pre-trained deep neural model for image classification. It is pre-trained with large image sets to understand the image semantics. Then it can be further fine-tuned for various downstream classification tasks. Since we use ViT to extract semantic features, we do not fine-tune ViT and use the pre-trained model parameters directly.

\subsection{Definition of \ours}
Based on the definition of ViT, we define \ours between two images as follows.

\begin{definition}[ViTScore]
  \label{def:VitScore}
  Given two images: $A$ with $n$ patches and $B$ with $m$ patches. To evaluate the semantic similarity between them, we first extract their semantic features using $\vit$ defined in Definition \ref{def:vit}:
  \begin{equation}
    \begin{split}
      \vit(A) &= (\mathbf{a_0}, \mathbf{a_1}, \dots, \mathbf{a_{n-1}})  \\
      \vit(B) &= (\mathbf{b_0}, \mathbf{b_1}, \dots, \mathbf{b_{m-1}})  \\
    \end{split}
  \end{equation}

  Then we calculate recall $R_\iss(A, B)$ and precision $P_\iss(A, B)$:
  \begin{equation}
    \label{eq:PR}
    \begin{split}
      R_\iss(A, B)&=\frac{1}{n}\sum_{i=0}^{n-1}\max_{0 \leq j < m}\mathbf a_i^\top\mathbf b_j\\
      P_\iss(A, B)&=\frac{1}{m}\sum_{j=0}^{m-1}\max_{0 \leq i < n}\mathbf a_i^\top\mathbf b_j
    \end{split}
  \end{equation}

  Finally, we calculate the \ours:
  \begin{equation}
    \label{eq:vitscore}
    \iss(A, B)=2\frac{R_\iss(A, B)\cdot P_\iss(A, B)}{R_\iss(A, B) + P_\iss(A, B)} .
  \end{equation}
\end{definition}

\subsection{Properties of \ours}

\ours has 3 important properties, including symmetry, boundedness, and normalization. We will introduce the details and provide proof in the following.

\begin{theorem}[Symmetry]
  For any two images $A$ and $B$, changing the order of the two images does not change the $\iss$, \ie
  \begin{equation}
    \iss(A, B) = \iss(B, A).
  \end{equation}
\end{theorem}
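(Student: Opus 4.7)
The plan is to reduce the symmetry of $\iss$ to two simple swaps: $R_\iss(A,B) = P_\iss(B,A)$ and $P_\iss(A,B) = R_\iss(B,A)$, together with the fact that the combining formula in equation~\eqref{eq:vitscore} is itself symmetric in its two scalar arguments (it is twice a harmonic mean).

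First I would unwind the definition of $R_\iss(B,A)$ and $P_\iss(B,A)$ by textually substituting $B$ for $A$ and $A$ for $B$ in equation~\eqref{eq:PR}, being careful that the outer normalisation factor and the index range track with the patch counts of whichever image is in the first argument slot. Specifically, $R_\iss(B,A) = \frac{1}{m}\sum_{j=0}^{m-1}\max_{0\le i<n}\mathbf b_j^\top\mathbf a_i$. The crucial observation is that inner products of real vectors are commutative, $\mathbf a_i^\top\mathbf b_j = \mathbf b_j^\top\mathbf a_i$, so this expression is identical summand-by-summand to $P_\iss(A,B)$ as written in~\eqref{eq:PR}. The same argument, with the roles of $i$ and $j$ interchanged, gives $P_\iss(B,A) = R_\iss(A,B)$.

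Second, I would plug these two identities into the definition of $\iss(B,A)$:
\begin{equation*}
\iss(B,A) = 2\,\frac{R_\iss(B,A)\cdot P_\iss(B,A)}{R_\iss(B,A)+P_\iss(B,A)} = 2\,\frac{P_\iss(A,B)\cdot R_\iss(A,B)}{P_\iss(A,B)+R_\iss(A,B)},
\end{equation*}
which equals $\iss(A,B)$ since multiplication and addition of real numbers are commutative. This closes the argument.

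There is no substantive obstacle here: the proof is essentially bookkeeping. The only place where one has to be mildly careful is in lining up the normalising constants ($\frac{1}{n}$ versus $\frac{1}{m}$) with the correct outer sum after swapping arguments, and in recognising that the scalar $\mathbf a_i^\top\mathbf b_j$ commutes even though the vectors do not in any stronger sense. One might additionally remark that the $\ell_2$ normalisation of the feature vectors from Definition~\ref{def:vit} is not needed for this particular property; symmetry holds for the unnormalised inner products as well, and will be relevant only later for boundedness.
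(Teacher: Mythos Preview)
Your proposal is correct and follows essentially the same route as the paper: the paper also observes that $R_\iss(A,B)=P_\iss(B,A)$ directly from equation~\eqref{eq:PR} and then invokes the commutativity of the product and sum in equation~\eqref{eq:vitscore}. Your version is a bit more explicit about the role of the inner-product commutativity and the bookkeeping of the normalising constants, and your closing remark that the $\ell_2$ normalisation is irrelevant for symmetry is a correct and useful side observation not made in the paper.
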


\begin{proof}
According to Equation (\ref{eq:PR}) we have

\begin{equation}
      R_\iss(A, B) = \frac{1}{n}\sum_{i=0}^{n-1}\max_{0 \leq j < m}\mathbf a_i^\top\mathbf b_j
      = P_\iss(B, A).
\end{equation}

Therefore according to Equation (\ref{eq:vitscore}), we have
\begin{equation}
    \begin{split}
        \iss(A, B) &= 2\frac{R_\iss(A, B)\cdot P_\iss(A, B)}{R_\iss(A, B) + P_\iss(A, B)}\\
        &= 2\frac{P_\iss(B, A)\cdot R_\iss(B, A)}{P_\iss(B, A) + R_\iss(B, A)}\\
        &= \iss(B, A).
    \end{split}
\end{equation}
\end{proof}

Since the semantic features extracted from ViT are normalized vectors, $\mathbf{a_i}^\top\mathbf{b_j}$ is the \emph{cosine similarity} between $\mathbf{a_i}$ and $\mathbf{b_j}$. Then, the \ours shares the same boundary with cosine similarity.
\looseness=-1

\begin{theorem}[Boundedness]
  \label{th:range}
  For any two images $A$ and $B$, the range of \ours is between $-1$ and $1$, \ie
  \begin{equation}
    -1\leq\iss(A, B)\leq 1.
  \end{equation}
\end{theorem}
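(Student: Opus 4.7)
The plan is to reduce the bound on $\iss(A,B)$ to two elementary facts: that every pairwise score $\mathbf{a_i}^\top\mathbf{b_j}$ already lives in $[-1,1]$, and that the F-measure in Equation (\ref{eq:vitscore}) is a harmonic mean, which cannot escape the range of its two arguments.

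First, I would appeal to Definition \ref{def:vit}, which specifies that the feature vectors are $\ell_2$-normalized, i.e.\ $\|\mathbf{a_i}\|_2 = \|\mathbf{b_j}\|_2 = 1$. By the Cauchy--Schwarz inequality,
\begin{equation}
  -1 \;=\; -\|\mathbf{a_i}\|_2\,\|\mathbf{b_j}\|_2 \;\leq\; \mathbf{a_i}^\top\mathbf{b_j} \;\leq\; \|\mathbf{a_i}\|_2\,\|\mathbf{b_j}\|_2 \;=\; 1,
\end{equation}
so each cosine similarity $\mathbf{a_i}^\top\mathbf{b_j}$ lies in $[-1,1]$. Taking a maximum over $j$ preserves this bound, and so does averaging over $i$, hence $R_\iss(A,B), P_\iss(A,B) \in [-1,1]$ by Equation (\ref{eq:PR}).

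Next, I would argue that the F-measure in Equation (\ref{eq:vitscore}) inherits the same bound. Writing $R = R_\iss(A,B)$ and $P = P_\iss(A,B)$, the quantity $\iss(A,B) = 2RP/(R+P)$ is the harmonic mean of $R$ and $P$. The clean case is $R,P \geq 0$: then the standard harmonic--arithmetic mean inequality gives $0 \leq 2RP/(R+P) \leq (R+P)/2 \leq 1$. For $R,P \leq 0$, substitute $R' = -R, P' = -P \in [0,1]$ and verify directly that $\iss(A,B) = -2R'P'/(R'+P') \in [-1,0]$, reducing to the previous case. Combining the two sign regimes yields $-1 \leq \iss(A,B) \leq 1$.

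The main obstacle is the mixed-sign case where $R$ and $P$ have opposite signs, because then $R+P$ can vanish or the harmonic-mean inequality no longer applies in the usual form. I expect to handle this either by the standard convention that $\iss(A,B) := 0$ when $R+P = 0$ (and checking directly that $|2RP/(R+P)| \leq 1$ whenever $R+P \neq 0$ and $R,P \in [-1,1]$ of opposite signs, using $|2RP| \leq |R|+|P|$ once $|R|,|P| \leq 1$), or by noting that in practice ViT features are correlated enough that $R, P > 0$ and so only the clean case is needed. Either way, the argument collapses to Cauchy--Schwarz plus a single inequality about harmonic means, and the proof should be quite short.
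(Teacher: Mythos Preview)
Your reduction to cosine similarity is exactly what the paper does: its entire justification is the one-line remark, placed just before the theorem, that because the ViT features are $\ell_2$-normalized each $\mathbf{a_i}^\top\mathbf{b_j}$ is a cosine similarity in $[-1,1]$, and hence ``\ours shares the same boundary with cosine similarity.'' The paper does not address the harmonic-mean step in Equation~(\ref{eq:vitscore}) at all.

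You go further and try to justify that step, and you correctly flag the mixed-sign case as the obstacle --- but your proposed fix there does not work. The inequality $|2RP| \leq |R| + |P|$ (which is indeed valid for $|R|,|P|\leq 1$) does not control $|2RP/(R+P)|$, because the denominator is $R+P$, not $|R|+|P|$; when the signs are opposite these two quantities differ drastically. Concretely, take $R = 1$ and $P = -0.99$: both lie in $[-1,1]$, yet
\[
\frac{2RP}{R+P} \;=\; \frac{-1.98}{0.01} \;=\; -198.
\]
So the expression in Equation~(\ref{eq:vitscore}) is \emph{not} bounded by $[-1,1]$ on the full square $[-1,1]^2$, and the theorem as literally stated cannot be proved without an additional hypothesis such as $R_\iss,P_\iss \geq 0$. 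Your fallback observation --- that in practice ViT features yield $R,P > 0$ --- is therefore not a mere convenience but the only honest route; the same-sign analysis you sketched is correct and is all that can actually be established.
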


As we applied \emph{greedy match} in our method, \ie the $\max$ operator in Equation (\ref{eq:PR}), the practical lower bound is higher than in Theorem \ref{th:range}. 
Practically, it is hard to find two images that do not have any semantic similarities. Therefore, we evaluate the \ours between a normal image and some random noise to get a practical lower bound of our \ours. In our overall experiments over 6 representative image datasets~(see Table~\ref{universality} for reference), we found the minimum of \ours evaluation is 0.19.
In general, the larger the \ours, the more semantically similar the two images.

\begin{theorem}[Normalization]
  The \ours between any image $A$ and itself is 1, \ie
  \begin{equation}
     \iss(A, A)=1.
  \end{equation}
\end{theorem}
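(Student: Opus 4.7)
The plan is to prove $\iss(A,A)=1$ by directly substituting $B=A$ into Definition \ref{def:VitScore} and showing that both the recall and precision terms reduce to $1$, from which the F1-style combination in Equation (\ref{eq:vitscore}) immediately gives the result.

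First I would set $B = A$, so that $\vit(B) = \vit(A) = (\mathbf{a_0}, \mathbf{a_1}, \dots, \mathbf{a_{n-1}})$ and $m = n$. The key inner quantity is then $\max_{0 \leq j < n} \mathbf{a}_i^\top \mathbf{a}_j$ for each fixed $i$. Because Definition \ref{def:vit} guarantees the patch features are $\ell_2$ normalized, we have $\mathbf{a}_i^\top \mathbf{a}_i = \|\mathbf{a}_i\|_2^2 = 1$, while for any $j$, the Cauchy-Schwarz inequality gives $\mathbf{a}_i^\top \mathbf{a}_j \leq \|\mathbf{a}_i\|_2 \|\mathbf{a}_j\|_2 = 1$. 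Hence the maximum is attained at $j=i$ and equals $1$.

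Next I would average this over $i$ to conclude
\begin{equation}
R_\iss(A,A) = \frac{1}{n} \sum_{i=0}^{n-1} 1 = 1,
\end{equation}
and by the identical argument (or by invoking the symmetry relation $R_\iss(A,A) = P_\iss(A,A)$ already established in the proof of Theorem 1), $P_\iss(A,A) = 1$. Plugging into Equation (\ref{eq:vitscore}) yields
\begin{equation}
\iss(A,A) = 2 \cdot \frac{1 \cdot 1}{1+1} = 1,
\end{equation}
which is the desired normalization identity.

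There is essentially no serious obstacle here: the result is a sanity check on the definition, and the entire argument rests on the $\ell_2$ normalization stipulated in Definition \ref{def:vit} together with Cauchy-Schwarz. The only point worth flagging carefully is that the maximum over $j$ is guaranteed to be achieved (since $j$ ranges over a finite index set), so the use of $\max$ rather than $\sup$ in Equation (\ref{eq:PR}) is well-defined and the value $1$ is genuinely attained at $j=i$.
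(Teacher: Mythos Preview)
Your proposal is correct and follows essentially the same approach as the paper's own proof: both arguments use the $\ell_2$ normalization from Definition~\ref{def:vit} to show $\max_j \mathbf{a}_i^\top \mathbf{a}_j = 1$, conclude $R_\iss(A,A) = P_\iss(A,A) = 1$, and then substitute into Equation~(\ref{eq:vitscore}). Your version is slightly more explicit in invoking Cauchy--Schwarz to justify that no other $j$ can exceed the value $1$, which the paper leaves implicit.
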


\begin{proof}
The semantic features extracted from ViT are normalized vectors, therefore
\begin{equation}
    \max_{0\leq j < m}\mathbf{a_i}^\top\mathbf{a_j} = \mathbf{a_i}^\top\mathbf{a_i} = ||\mathbf{a_i}||_2^2 = 1
\end{equation}

According to Equation (\ref{eq:PR}), we have
\begin{equation}
    R_\iss(A, A) = P_\iss(A, A) = 1.
\end{equation}

Then according to Equation (\ref{eq:vitscore}), we have
\begin{equation}
    \iss(A, A) = 1.
\end{equation}
\end{proof}

Theorems 1, 2, and 3 prove that \ours has the important properties of symmetry, boundedness, and normalization.
These important properties make \ours convenient and intuitive for implementation in image measurement. 
Table.~\ref{tab:properties} shows the comparison of these properties of \ours to the other 3 typical metrics~(\psnr, \ms, and \lp).

\begin{table}[htb]
  \renewcommand{\arraystretch}{1.5}
  \setlength{\tabcolsep}{5pt}
  \centering
  \caption{\label{tab:properties}Comparison of 3 important properties of \ours to 3 typical image metrics.}
  \begin{tabular}{|c|c|c|c|}
      \hline
       Metrics & Symmetry & Boundedness & Normalization  \\
       \hline \hline
       \ours & $\surd$  & $\surd$ & $\surd$ \\
       \hline
       \psnr & $\surd$ & $\times$ & $\times$ \\
       \hline
       \ms & $\surd$  & $\surd$ & $\surd$ \\
       \hline
       \lp & $\surd$  & $\surd$ & $\surd$ \\
       \hline
  \end{tabular}
\end{table}

\subsection{Ablation Study}

An ablation study is designed to investigate whether some structure or feature of the proposed model is valid.
Different metrics may have different features. For example, according to Equation (\ref{eq:LPIPS}), LPIPS uses mean pooling to evaluate the semantic similarity. While \ours uses max pooling. To evaluate the contributions of these features, we design an \emph{ablated} \ours models, named $\ours_{Mean}$, and compare it with the original \ours, named $\ours_{Origin}$, which is described in Definition \ref{def:VitScore}.

Specifically, we replace the max pooling in Equation (\ref{eq:PR}) by mean pooling:
\begin{equation}
    \begin{split}
        R_\mathrm{MeanPooling}(A, B) &= P_\mathrm{MeanPooling}(A, B) \\
        &= \frac{1}{nm}\sum_{i=0}^{n-1}\sum_{j=0}^{m-1}\mathbf a_i^\top\mathbf b_j.
    \end{split}
\end{equation}

Results are shown in Section \ref{sec:ablation}.

\section{\ours Metric Evaluation}

To evaluate the performance of \ours for image semantic similarity, we compare \ours with the 3 typical categories of metrics (\psnr, \ms, and \lp) through 4 classes of experiments over 10 typical datasets from real-world applications and discuss the effects of the \ours metric.
Speciﬁcally, we answer the following 4 research questions.

\begin{itemize}
  \item \textit{Question 1}: Does \ours \textit{outperform} the previous, widely-used metrics, such as \psnr, \ms, and \lp, for evaluating the image semantic similarity?
  \item \textit{Question 2}: 
  How does \ours perform when \textit{applied to classical image communications}?
  \item \textit{Question 3}: How does \ours perform when \textit{applied to image SC systems}? 
  \item \textit{Question 4}: How does \ours perform when \textit{applied to image SC systems with semantic attack}?
\end{itemize}

We answer the above 4 questions through 4 classes of experiments.
For fairness, we use the following experimental settings.
To calculate \ours, we resized the input image to size $224 \times 224 \times 3$ and fed it into the ViT model. 
We build upon PyTorch~\cite{paszke2019pytorch} and the timm library~\cite{timm}\footnote[4]{We use the parameters from the model vit\_base\_patch16\_224 in the timm library. By choosing appropriate parameters that are already publicly available, such as image size and patch size, we do not need to pre-train the ViT model for a specific image dataset.}.
All experiments are performed on a server with an Intel Xeon Gold 6248R 3.0 GHz CPU, five NVIDIA A100-PCIE-40G GPU and 128 GB of memory.

\subsection{Advantages of \ours in Image Semantic Similarity Evaluation}

In CV field, many downstream tasks, such as image captioning, image classification~\cite{cheng2023class} and semantic segmentation~\cite{wang2017semantic}, require a semantic understanding of an image. Therefore, we can evaluate \ours using these CV tasks.
As \ours is built from BERTScore, it is interesting to evaluate the correction between BERTScore and \ours. Hence, we choose the image captioning task for an experiment in this subsection.
COCO is a representative image captioning dataset\cite{DBLP:conf/eccv/LinMBHPRDZ14}\footnote[5]{The COCO dataset is a large-scale object detection, segmentation, keypoint detection, and captioning dataset. We use the image captioning part within it in this paper.} . In this dataset, each image is labeled with a caption in English. The caption is able to describe the semantics of the image. Thus, we can evaluate the semantic similarities between the two images and their captions. The results are shown in Fig.~\ref{vsBertscore}.
Based on this, we calculate the Pearson correlation coefficient~(PCC) between \psnr and BERTScore, the PCC between \ms and BERTScore, the PCC between \lp and BERTScore, and the PCC between $\iss$ and BERTScore, respectively. The PCC is defined by

\begin{equation}
    {\rm PCC}(x,y) = \frac{Cov(x,y)}{\sqrt{Var(x) Var(y)}},
\end{equation}

\noindent where $Cov(\cdot)$ is the covariance function and $Var(\cdot)$ is the variance function.
Numerical results verify that \ours is positively correlated with BERTScore compared to \psnr, \ms, and \lp.
\looseness=-1

\begin{figure}[htbp]
  \centering
  \subfloat[PCC (\psnr, BERTScore)\\= $0.0515$.]{
  \includegraphics[width=0.21\textwidth]{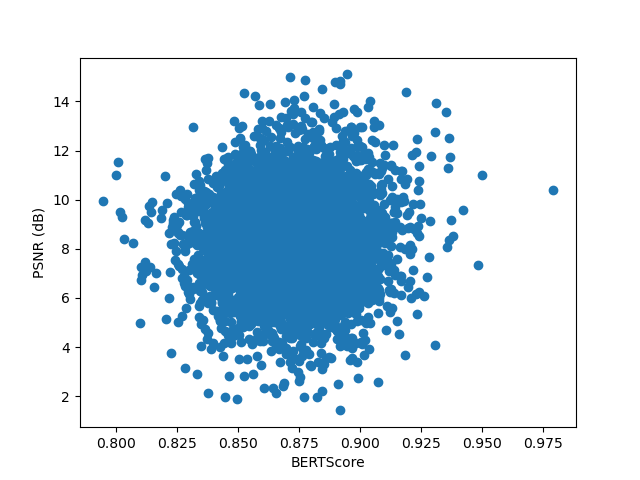}
  }
  \quad
  \subfloat[PCC (\ms, BERTScore)\\= $0.0576$.]{
  \includegraphics[width=0.21\textwidth]{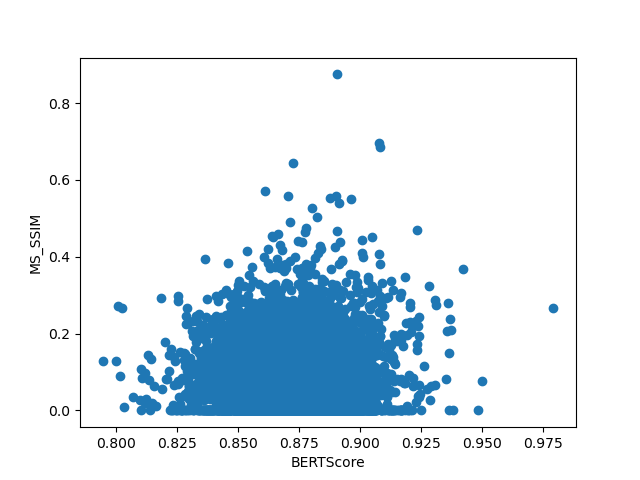}
  }
  \quad
  \subfloat[PCC (\lp, BERTScore)\\= $-0.1217$.]{
  \includegraphics[width=0.21\textwidth]{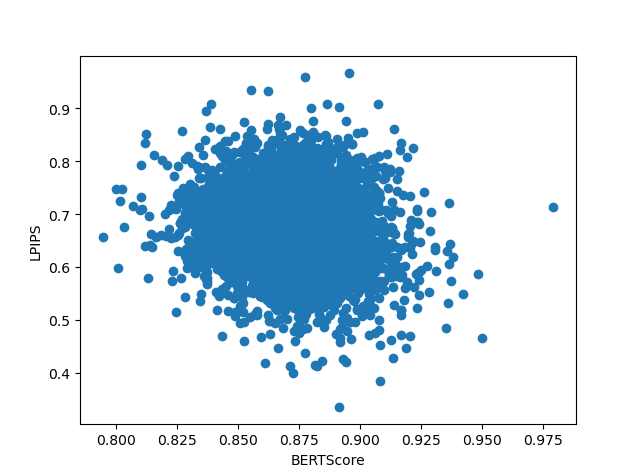}
  }
  \quad
  \subfloat[PCC (ViTScore, BERTScore)\\= $0.2198$.]{
  \includegraphics[width=0.21\textwidth]{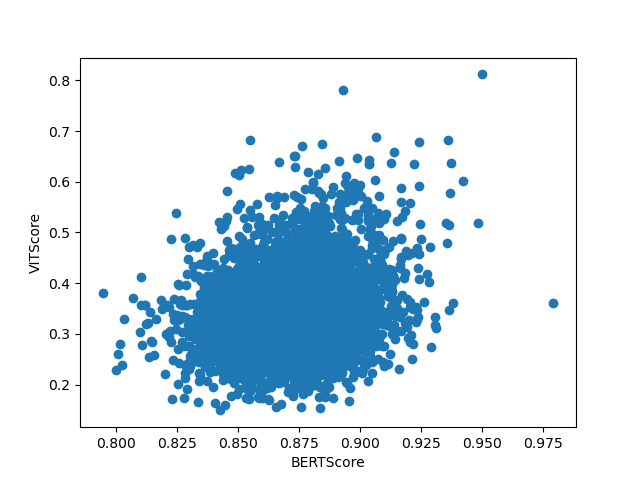}
  }
  \caption{The comparison of the correlations between \psnr, \ms, \lp, \ours and BERTScore over COCO dataset. The negative value of the Pearson correlation coefficient between \lp and BERTScore indicates higher similarity for lower LPIPS values. The Pearson correlation coefficient between \ours and BERTScore is larger than the absolute value of the coefficient between LPIPS and BERTScore, indicating that \ours is more strongly correlated with BERTScore than LPIPS.}
  \label{vsBertscore}
\end{figure}

\begin{figure}[htbp]
  \centering
  \subfloat[Image Caption in COCO dataset: A woman on a tennis court is swinging a racquet.]{
  \includegraphics[width=0.21\textwidth]{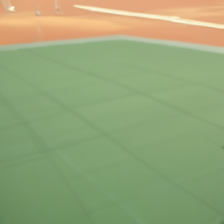}
  }
  \quad
  \subfloat[Image Caption in COCO dataset: A painting of kids on the bathroom floor which is tile.]{
  \includegraphics[width=0.21\textwidth]{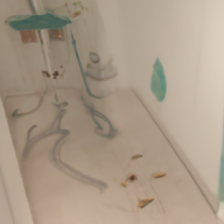}
  }
  \quad
  \subfloat[Image Caption in COCO dataset: Beef and vegetables on a plate sitting on a table.]{
  \includegraphics[width=0.21\textwidth]{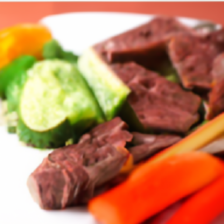}
  }
  \quad
  \subfloat[Image Caption in COCO dataset: A white plate filled with meat and vegetables.]{
  \includegraphics[width=0.21\textwidth]{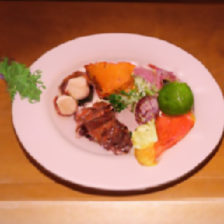}
  }
  \caption{Comparison of the evaluation examples with BERTScore, \psnr, \ms, \lp and \ours. The evaluations between (a) and (b): BERTScore = 0.89, \psnr = 16.23, \ms = 0.47, \lp = 0.56, \ours = 0.32. While the evaluations between (c) and (d): BERTScore = 0.93, \psnr = 7.28, \ms = 0.00, \lp = 0.61, \ours = 0.66. Intuitively, the semantic similarity between (c) and (d) is higher than that of (a) and (b). Surprisingly, \ours performs better than \psnr, \ms and \lp in these cases, which is consistent with BERTScore, agreeing well with human perceptual judgments.}
  \label{ExpvsBS}
\end{figure}

To be specific, we take for example two pairs of images to show the intuitive correlation among the 5 metrics, see Fig.~\ref{ExpvsBS} for reference.
The first pair of images (a) and (b) in Fig.~\ref{ExpvsBS} has low semantic similarity. Both the BERTScore and \ours of them are at low levels.
The second pair of images (c) and (d) in Fig.~\ref{ExpvsBS} has much higher semantic similarity than the first pair. Consequently, the BERTScore and \ours of the second one both result in a relatively high score. Whereas, the \psnr, \ms, and \lp seem to lose their efficacy of semantic similarity measurement in this situation.

\subsection{Applied to Classical Image Communication Systems}

In this subsection, we apply the \ours metric to classical image communication systems. 
We use a source-channel separation-based architecture, combining JPEG source coding and an assuming capacity-achieving channel code.
We carry out numerical experiments over additive white Gaussian noise~(AWGN) channels and Rayleigh fading channels respectively to simulate real wireless image communication scenarios. The capacity of an AWGN channel is given by ${\rm Cap} = \frac{1}{2} \log_2 (1 + 10^{0.1 \times {\rm SNR }})$, where SNR is the average signal-to-noise ratio ${\rm SNR} = 10 \log_{10} \frac{1}{\sigma^2}$ (dB), representing the ratio of the average power of the channel input to the average noise power. Correspondingly, the capacity of a Rayleigh fading channel is given by ${\rm Cap_{Fading}} = \frac{1}{2} \log_2 (1 + 10^{0.1 \times {\rm SNR*h }})$, where $h$ denotes the channel gain that remains constant throughout $k$ channel uses.
In this communication architecture, an input image is first represented as a vector of pixel intensities $\mathbf{x} \in \mathbb{R}^n$, then transmitted in $k$ uses of the channel.
The image dimension $n$ is referred to as the {\textit{source bandwidth}} and is given by the product of the image's height $H$, width $W$ and the number of color channels $C$, {\ie} $n = H \times W \times C$.
Utilizing JPEG coding, we define the {\rm image compression ratio} as $R_{\rm comp} = m/n$, where $m$ is denoted as the size in bits of the compressed image, which is the channel input.
The number of channel-use $k$ is also defined as the {\textit{channel bandwidth}}.
Hence, we define the {\textit{transmission ratio}} (also named as the {\textit{channel bandwidth ratio})} as ${\rm CBR} = k/n$.
\looseness=-1

We compare the image transmission performance in terms of the average \ours with 3 typical metrics (\psnr, \ms, and \lp) over COCO dataset. See Figs.~\ref{AWGN-ViT-JPEGcapacity} and~\ref{Fading-ViT-JPEGcapacity} for details.
Experimental results show that the \ours is universal and robust for image transmission over various channels with a wide range of {\rm CBRs}.
Besides, we observe that the performance trends of \ours are in line with those of \psnr, \ms, and \lp. 
From Figs.~\ref{AWGN-ViT-JPEGcapacity} and~\ref{Fading-ViT-JPEGcapacity}, we can see that the \ours and \lp evaluations converge faster than the PSNR and MS-SSIM evaluations in the high SNR and CBR regions, where the semantics of the images are very similar.
Note that in this implementation, a capacity achieving code is used as the channel code, which means that the performance can be improved as SNR and CBR increase. In practice, however, the performance converges to a threshold value, which is difficult to improve.   Especially, for bit level, the threshold value sometimes maybe low because of the image quantization or the error propagation caused by separated source-channel coding~(SSCC). In contrast, the proposed metric evaluation can converge to a high level even in the low SNR and CBR regions, as \ours evaluates the images at semantic level. 
Moreover, as we have shown in the above-mentioned experiments that \ours outperforms the \psnr, \ms, and \lp for image semantic similarity, we can conclude that the proposed \ours metric is robust and promising when deployed in SC for images.
\looseness=-1

\begin{figure*}[htbp]
  \centering
  \subfloat[\psnr vs \ours.]{
  \includegraphics[width=0.3\textwidth]{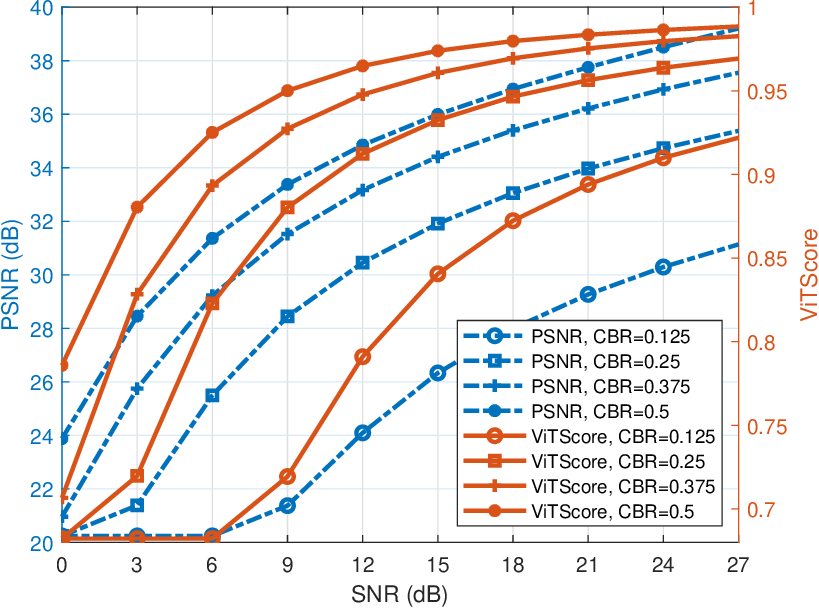}
  }
  \quad
  \subfloat[\ms vs \ours.]{
  \includegraphics[width=0.3\textwidth]{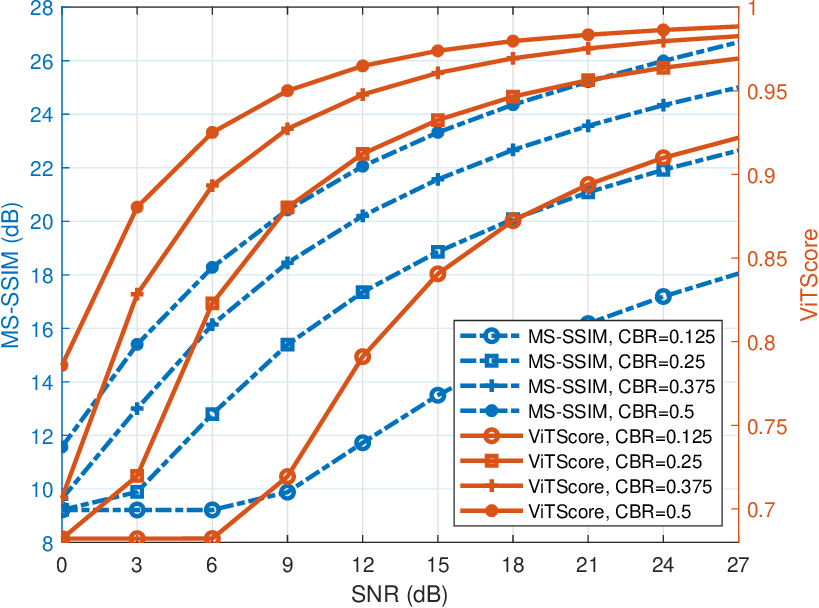}
  }
  \quad
  \subfloat[\lp vs \ours.]{
  \includegraphics[width=0.3\textwidth]{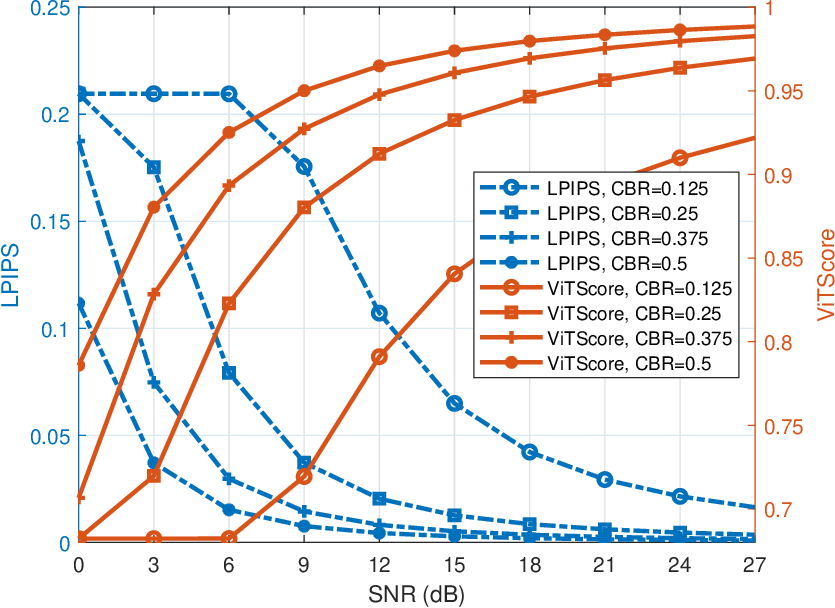}
  }
  \caption{The metric evaluation of the average performance of image transmission with the concatenation of JPEG code followed by an assuming capacity-achieving channel coding over an AWGN channel through COCO image dataset.
  Average reconstruction quality increases gradually with the channel bandwidth ratio ${\rm CBR}$ increasing, as well as the channel environment improving.
  The performance trends of \ours are in line with those of the other 3 typical metrics~(\psnr, \ms, and \lp).}\label{AWGN-ViT-JPEGcapacity}
\end{figure*}

\begin{figure*}[htbp]
  \centering
  \subfloat[\psnr vs \ours.]{
  \includegraphics[width=0.3\textwidth]{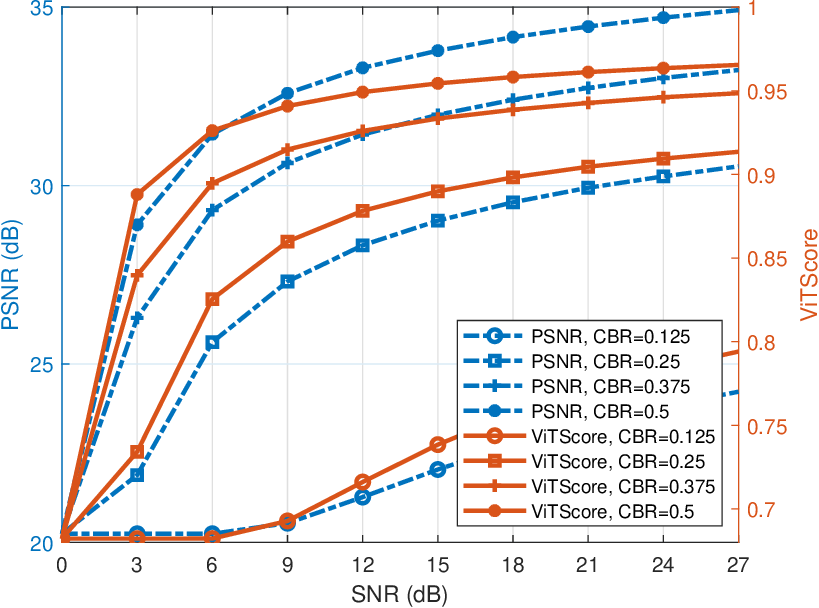}
  }
  \quad
  \subfloat[\ms vs \ours.]{
  \includegraphics[width=0.3\textwidth]{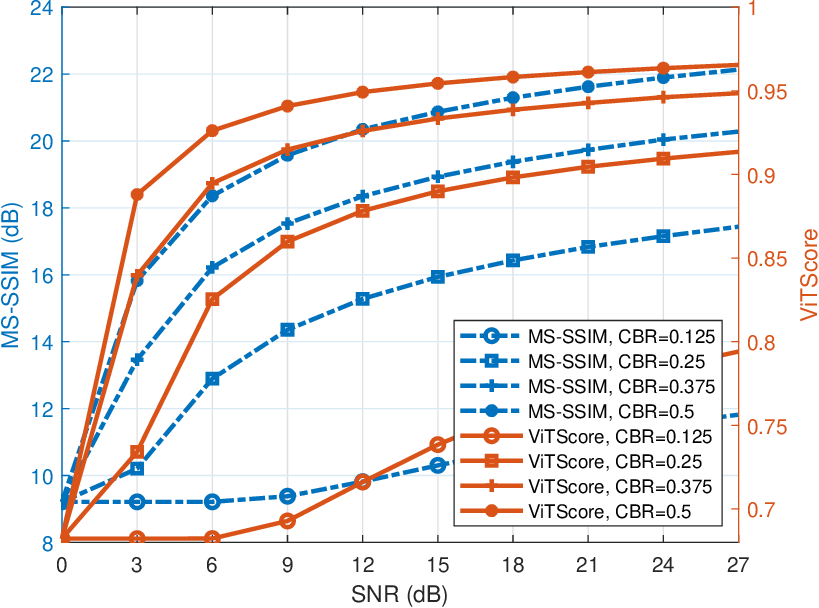}
  }
  \quad
  \subfloat[\lp vs \ours.]{
  \includegraphics[width=0.3\textwidth]{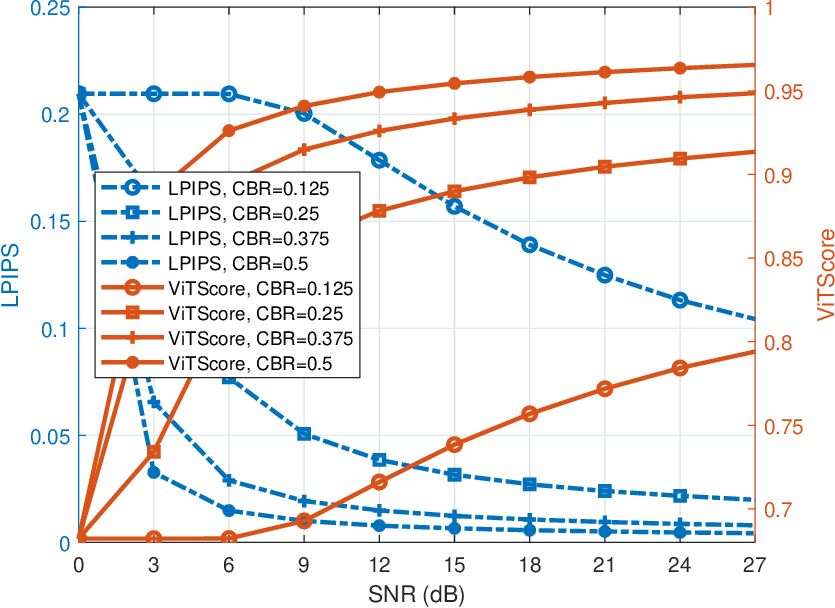}
  }
  \caption{The metric evaluation of the average performance of image transmission with the concatenation of JPEG code followed by an assuming capacity-achieving channel coding over a Rayleigh fading channel through COCO image dataset.
  Average reconstruction quality increases gradually with the channel bandwidth ratio ${\rm CBR}$ increasing, as well as the channel environment improving.
  The performance trends of \ours are in line with those of the other 3 typical metrics~(\psnr, \ms, and \lp).}\label{Fading-ViT-JPEGcapacity}
\end{figure*}

\subsection{Applied to Image Semantic Communication Systems}

In this subsection, we further analyze the application of \ours for image SC.
Since we have shown the \ours performance in SSCC scheme in the last subsection, here we test the \ours performance for image SC by employing a joint source-channel coding~(JSCC) scheme.
Similar to the landmark works of~\cite{dai2022nonlinear,bourtsoulatze2019deep,kurka2020deep,kurka2021bandwidth}, we utilized the proposed \ours metric to practical image SC over analog channels. To be specific, an image is first transformed into a vector of pixel intensities $\mathbf{x} \in \mathbb{R}^n$, then mapped to a vector of continuous-valued channel input symbols $\mathbf{s} \in \mathbb{R}^k$ via an AI-based encoding function. The channel bandwidth ratio is defined as ${\rm CBR} = k/n$, where $k < n$, denoting the code rate. Under this architecture, by introducing AI technology to extract the semantics of the image, the communication paradigm shifts to a modern version of JSCC.
\looseness=-1

We test the evaluation performance of \ours compared with the other 3 metrics~(\psnr, \ms and \lp) by utilizing 4 landmark image SC models: NTSCC~\cite{dai2022nonlinear}, DeepJSCC-V~\cite{zhang2023predictive}, ADJSCC~\cite{Jialong} and DeepJSCC~\cite{Bourtsoulatze}. All 4 models are based on DL techniques. Thus, the 4 models can be referred to as different variants of Deep JSCC, which is flexible and bandwidth efficient.
Strikingly, Deep JSCC does not suffer from the “cliff effect” presented in traditional image transmission systems, and it provides a graceful performance degradation as the channel SNR varies with respect to the SNR value assumed during training.

Deep JSCC for wireless image transmission~(named DeepJSCC) proposed in~\cite{Bourtsoulatze} is an end-to-end communication system, which does not rely on explicit codes for either compression or error correction, where the encoding and decoding functions are parameterized by two convolutional neural networks~(CNNs). The CNN can directly map the image pixel values to the complex-valued channel input symbols, and the communication channel is incorporated in the neural network~(NN) architecture as a non-trainable layer. That is where the name of DeepJSCC comes from.
Compared with the digital transmission concatenating JPEG or JPEG2000 compression with a capacity-achieving channel code at low SNR and CBR regions, DeepJSCC achieves superior performance.
As a new emerged methodology, the Deep JSCC models suffer from two problems: SNR-adaption problem and CBR-adaption problem.
To solve the SNR-adaption problem, an attention module-based Deep JSCC model~(named ADJSCC) is proposed in~\cite{Jialong}, which can adjust the learned image features in deferent channel SNR conditions.
Moreover, to solve both the SNR-adaption and the CBR-adaption problems, a novel image SC framework, named predictive and adaptive deep coding~(PADC), is proposed in~\cite{zhang2023predictive}. PADC is realized by a variable code length enabled Deep JSCC model~(named DeepJSCC-V) for realizing flexible code length adjustment, which is also named as ADJSCC-V. As demonstrated in~\cite{zhang2023predictive}, the DeepJSCC-V model can achieve similar PSNR performances compared with the ADJSCC model with flexible code length adjustment.
Furthermore, a new semantic encoding architecture with nonlinear transform, named nonlinear transform source-channel coding~(NTSCC) is proposed in~\cite{dai2022nonlinear}. 
The NTSCC yields end-to-end transmission performance surpassing classical separated-based BPG source compression combined with low-density parity-check~(LDPC) channel coding scheme and standard Deep JSCC methods.
Different from the DeepJSCC model, the NTSCC first learns a nonlinear analysis transform to map the source data into latent space, effectively extracting the source semantic features and providing side information for source-channel coding. This important step is defined as semantic encoding, which largely improves the overall SC system performance.

As our purpose is not to focus on the comparison with the performance of different models but to test the robustness of the proposed metric to the evaluation performance of different system models, we use different image datasets to train the 4 models, obtaining the pre-trained models (generated encoder-decoder pairs (EDP)), and then test on the same Kodak dataset~(one of the commonly used datasets in the 4 models presenting papers). Therefore, the performance of the 4 models may be slightly different from the presentation of the original papers (also caused by the different training techniques).
Experimental results are shown in Figs.~\ref{SC-SNR} and~\ref{SC-R0025}.
We can obtain that, similar to the SSCC scheme, the \ours and \lp evaluations also converge faster than the PSNR and MS-SSIM evaluations in the high SNR and CBR regions, where the semantics of the images are very similar.
Not only for SSCC scheme, but also the JSCC scheme, the performance threshold value may hard to improve at bit level even in high SNR and CBR regions.
However, the proposed metric evaluation is able to converge to a high level even in the low SNR and CBR regions, suggesting that the relatively low SNR or CBR is sufficient for the SC systems at the semantic level.

\begin{figure*}[htbp]
  \centering
  \subfloat[\ours vs \psnr.]{
  \includegraphics[width=0.3\textwidth]{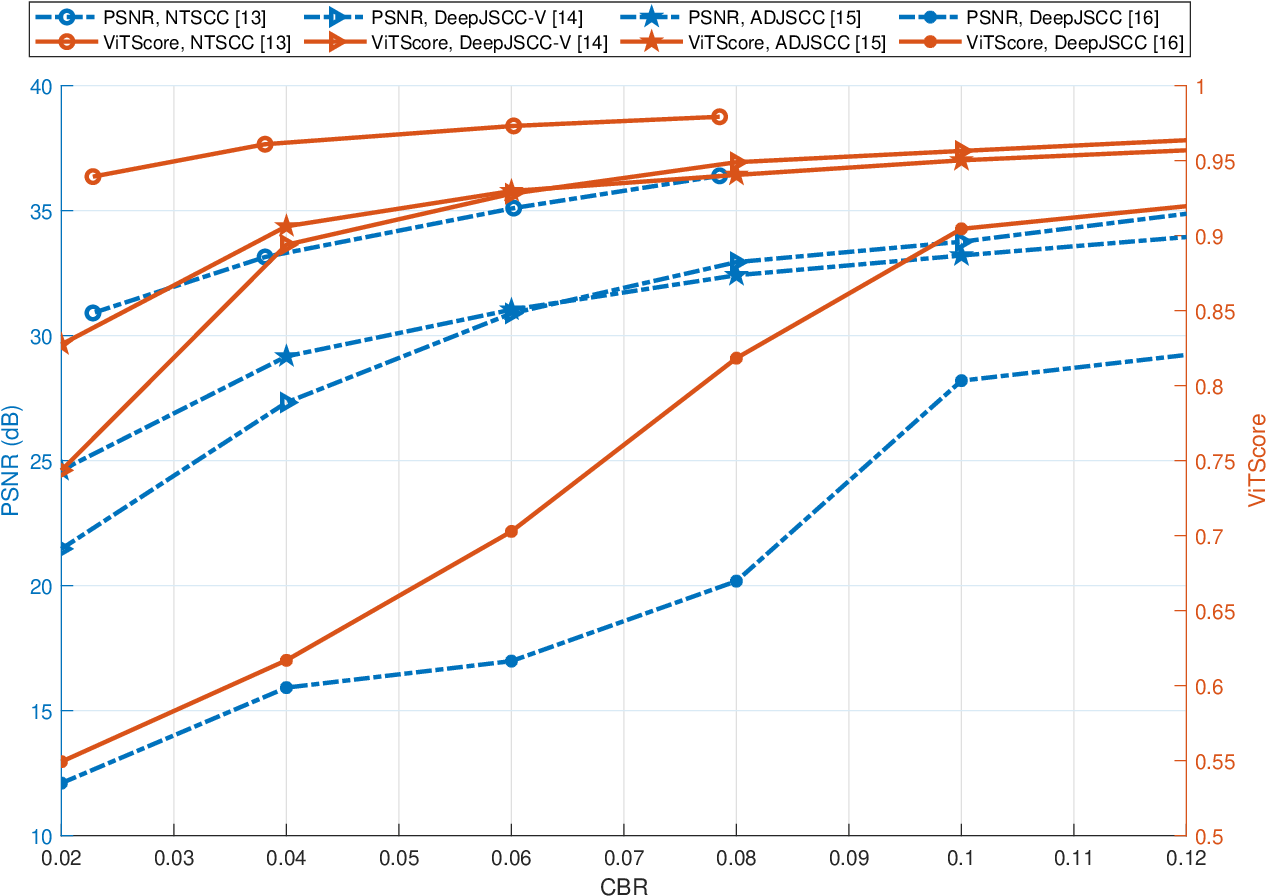}
  }
  \quad
  \subfloat[\ours vs \ms.]{
  \includegraphics[width=0.3\textwidth]{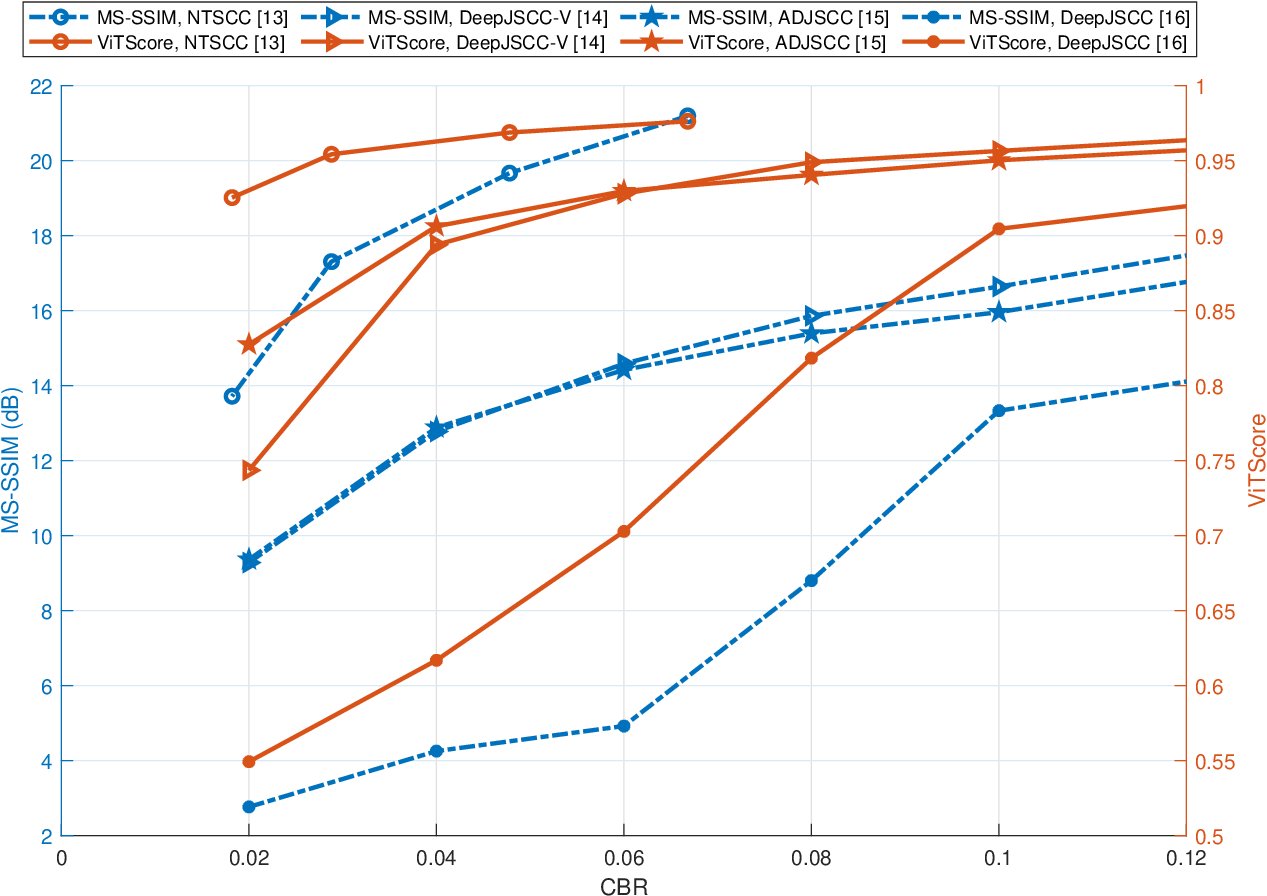}
  }
  \quad
  \subfloat[\ours vs \lp.]{
  \includegraphics[width=0.3\textwidth]{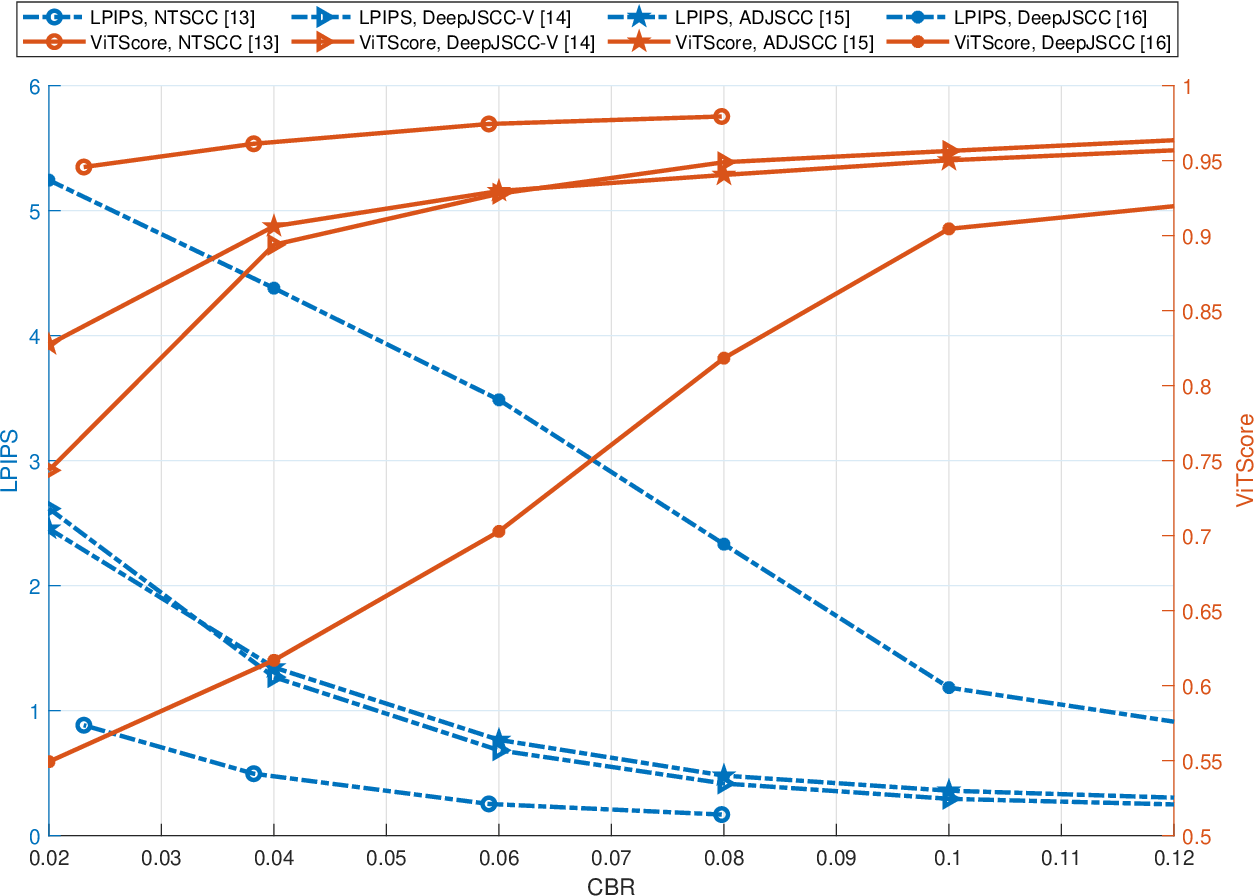}
  }
  \caption{The metric evaluation of the average performance of image semantic communications based on the 4 semantic communication models~(NTSCC~\cite{dai2022nonlinear}, DeepJSCC-V~\cite{zhang2023predictive}, ADJSCC~\cite{Jialong} and DeepJSCC~\cite{Bourtsoulatze}) over an AWGN channel at SNR = 15dB through the Kodak dataset. Average reconstruction quality increases gradually with the channel bandwidth ratio ${\rm CBR}$ increasing, as well as the channel environment improving.
  The performance trends of \ours are in line with those of the other 3 typical metrics~(\psnr, \ms, and \lp). }
  \label{SC-SNR}
\end{figure*}

\begin{figure*}[htbp]
  \centering
  \subfloat[\ours vs \psnr.]{
  \includegraphics[width=0.3\textwidth]{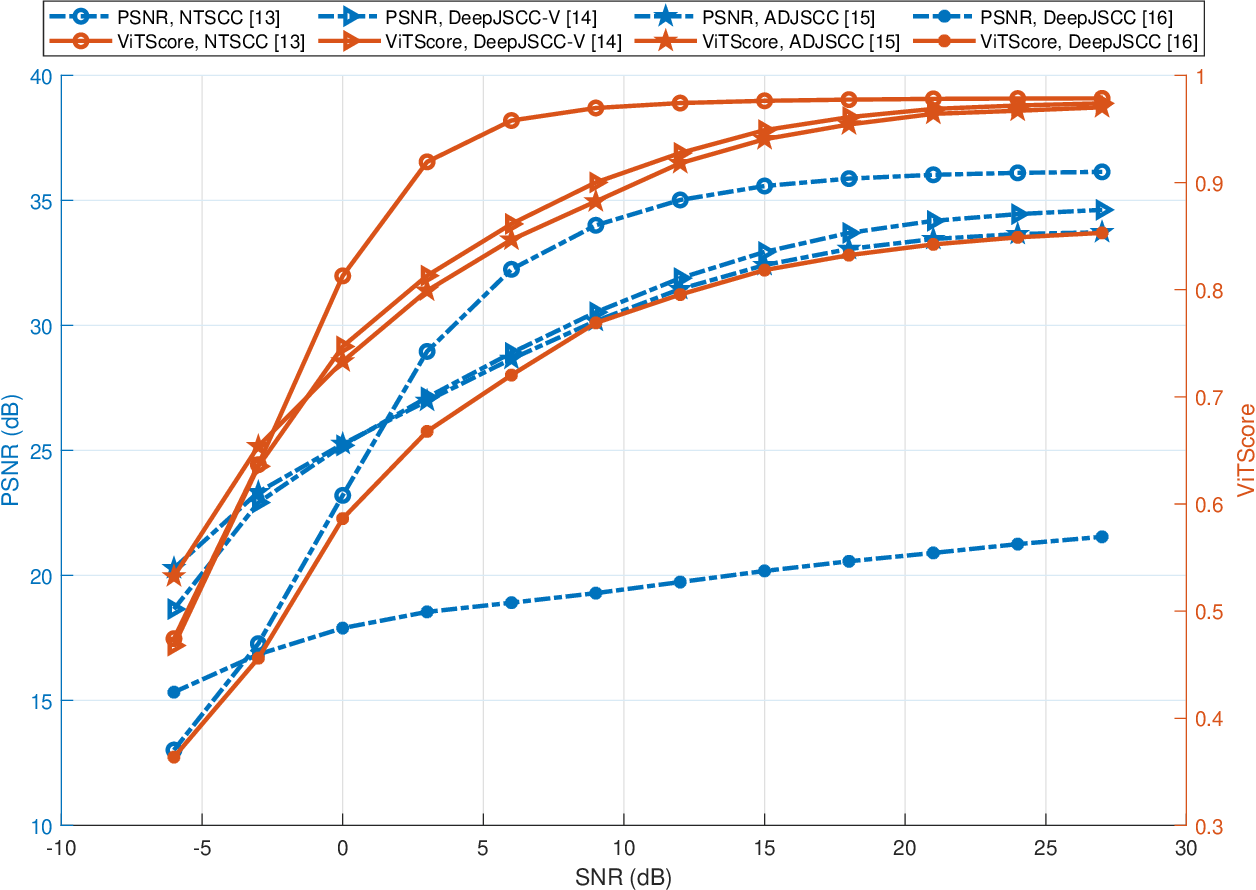}
  }
  \quad
  \subfloat[\ours vs \ms.]{
  \includegraphics[width=0.3\textwidth]{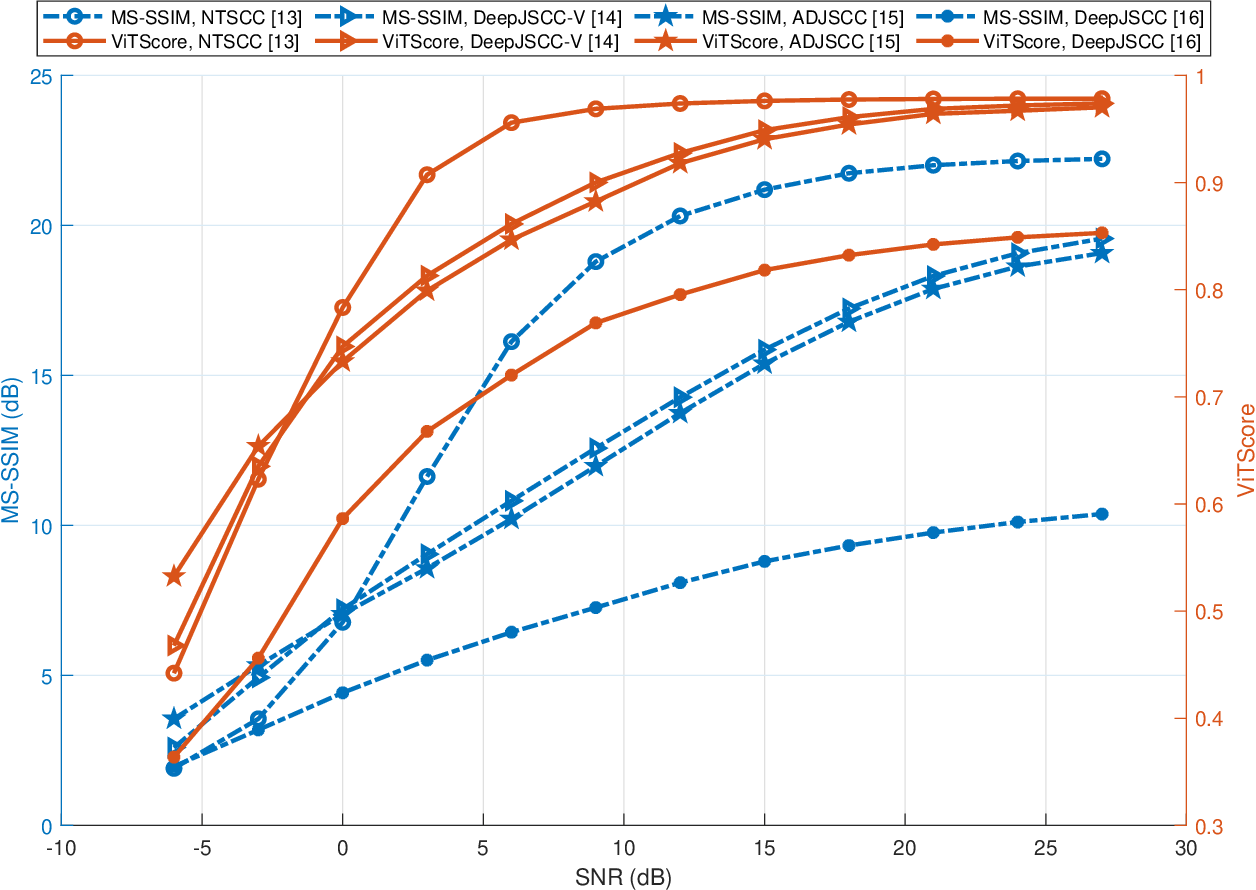}
  }
  \quad
  \subfloat[\ours vs \lp.]{
  \includegraphics[width=0.3\textwidth]{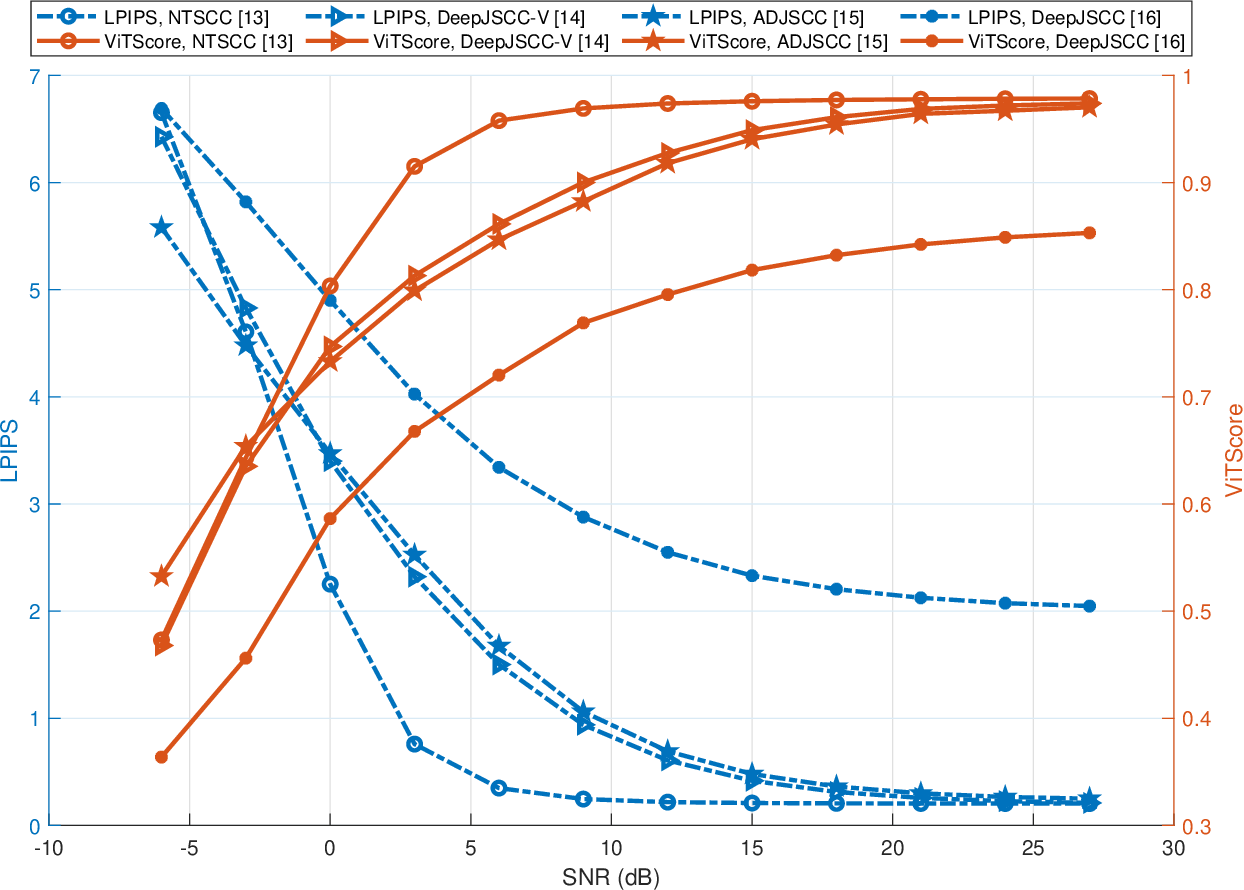}
  }
  \caption{The metric evaluation of the average performance of image semantic communications based on the 4 semantic communication models~(NTSCC~\cite{dai2022nonlinear}, DeepJSCC-V~\cite{zhang2023predictive}, ADJSCC~\cite{Jialong} and DeepJSCC~\cite{Bourtsoulatze}) versus the SNR at the AWGN channel through the Kodak dataset with the average channel bandwidth ratio fixed ${\rm CBR} \approx 0.08$.
  Average reconstruction quality increases gradually with the channel environment improving, i.e., the SNR increasing.
  The performance trends of \ours are in line with those of the other 3 typical metrics~(\psnr, \ms, and \lp).}
  \label{SC-R0025}
\end{figure*}

\subsection{Applied to Image Semantic Communication Systems with Semantic Attack}

Significantly different from classical communication systems, SC systems have semantic channels, where semantic noise exists. 
Semantic noise has a small or large impact on the SC system performance. For example, the semantic attack on the task-irrelevant semantics may have little effect on task execution. While the images are incorrectly semantic-encoded or the training datasets are semantically attacked, the performance of the AI model trained by the images may be affected seriously~\cite{du2023rethinking}.
Considering the evaluation of semantic changes is crucial for robust SC system design.
Specifically, for image SC scenarios, there exist two types of channels, including physical channels and semantic channels. How to evaluate and combat the physical channel impairment in wireless communications has been investigated a lot in the past. However, for semantic channels and semantic noise, it is a new problem.
Semantic noise could be introduced in semantic encoding, data transportation and semantic decoding processes of the SC systems~\cite{shi2021semantic}. Particularly, during the data transmission stage, the semantic noise can be generated by malicious attackers~\cite{hu2023robust}.
For image, the semantic noise can be introduced by applying the adversarial samples~\cite{szegedy2013intriguing}, which makes it possible to fail the networks' tasks.
In this subsection, we discuss how \ours performs in image SC with semantic noise.
Generally speaking, the existing image SC schemes can be divided into two categories: task-oriented and image reconstruction.
Therefore, we evaluate the metrics performance of image semantic changes with some existing typical works on the two categories of semantic attack as follows.
\looseness=-1

\textit{(1) Semantic Attack to Mislead Image Classification Task}

In~\cite{hu2023robust}, the authors proposed a framework for robust end-to-end SC systems to combat semantic noise, where the adversarial training with weight perturbation is developed to incorporate the samples with semantic noise in the training dataset. 
A masked vector quantized-variational autoencoder~(VQ-VAE) is utilized with the noise-related masking strategy.
With the masked VQ-VAE enabled codebook, which is shared by the transmitter and the receiver for encoded feature representation, the proposed model~(named Robust-SC) significantly improves the system robustness against semantic noise with a remarkable reduction in the transmission overhead.
Therefore, we utilize the pre-trained Robust-SC in~\cite{hu2023robust} as an image SC system model with semantic noise to compare the evaluation performance of \ours with the 3 typical metrics. 
We test the image semantic similarities on average over the first 10 categories of images from the large-scale 1000-class ImageNet-1K dataset~\cite{deng2009imagenet}.
\looseness=-1

Adversarial samples can fool the DL-based models and mislead the classification, however, the reconstructed images look identical to the original ones for humans, since the small perturbations caused by semantic noise added to the images are barely noticeable to human perception~\cite{Qin2022SemanticCP,hu2023robust}.
Fig.~\ref{RobustSemCom} shows a case evaluated by \ours and the 3 typical metrics of reconstruction performance with the Robust-SC model.
We can see that, the metrics give a good reconstruction quality assessment except \psnr.
Furthermore, we calculate the PCC between the metric and the system model cross-entropy~(CE) loss. 
The more the CE loss, the less the semantic similar of the images.
See Table.~\ref{tab: Robust-SC} for reference. We can find that, the \ours, \psnr and \ms are negatively correlated with the CE loss, while the \lp is positively correlated. 
The absolute value of PCC between \ours and the CE loss is the largest one, indicating that \ours can better reflect the change of the image semantics.
\looseness=-1

\begin{figure}[htbp]
  \centering
  \subfloat[Original image.]{
  \includegraphics[width=0.2\textwidth]{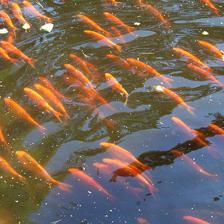}
  }
  \quad
  \subfloat[Reconstructed image with the Robust-SC model.]{
  \includegraphics[width=0.2\textwidth]{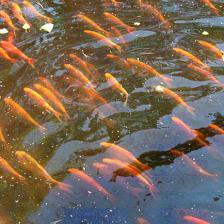}
  }
  \caption{Utilizing the Robust-SC model, the performance evaluation between an image (a) and its reconstruction (b): \psnr = 24.83, \ms = 0.96, \lp = 0.05, \ours = 0.91. 
  Generally, these metrics give a good reconstruction quality assessment except \psnr. }\label{RobustSemCom}
\end{figure}

\begin{table}[htbp]
    \renewcommand{\arraystretch}{1.5}
    \centering
    \caption{\label{tab: Robust-SC} Reconstruction performance evaluation with 4 metrics and their PCC with the system model CE loss in average over 10 categories images from ImageNet 1K dataset under accuracy performance of image classification with 60$\%$.}
    \begin{tabular}{|c|c|c|}
        \hline
         Metrics & Evaluation & PCC with cross-entropy \\
         \hline \hline
         $\ours$ & 0.9241  & -0.1625 \\
         \hline
         $\psnr$ & 26.9866  & -0.1065 \\
         \hline
         $\ms$ & 0.9655  & -0.1594 \\
         \hline
         $\lp$ & 0.0528  & 0.1384 \\
         \hline
    \end{tabular}
\end{table}

\textit{(2) Semantic Attack with Image Inverse}

Utilizing an in-domain GAN inversion approach~\cite{zhu2020indomain}, an image can be inverted to a latent code with a domain-guided encoder, which not only can be faithfully reconstructed but also can be semantically meaningful for editing.
As an emerging technique to bridge the real and fake
image domains, GAN inversion aims to invert a given image back into the latent space of a pre-trained GAN model so that the image
can be faithfully reconstructed from the inverted code by the generator~\cite{xia2022gan}.
The inverted codes
have important properties: having supported resolution,
being semantic-aware, being layer-wise, and having out-of-distribution generalizability, which make the GAN inversion methods have a widespread application in the image processing downstream tasks.
In~\cite{han2023generative}, the authors proposed a generative model-based image SC model to improve the efficiency of image transmission. With this model, the transmitter extracts the interpretable latent representation from the original image by a generative model exploiting the GAN inversion method. 
Motivated by this work, we utilized a similar method to simulate the image SC with semantic noise.
Specifically, we use the pre-trained In-Domain GAN Inversion model~\cite{zhu2020indomain} to build up an image SC model~(named ID-GAN-inv-based SC), where the physical channel is supposed to be noiseless for simplicity.

We test the image semantic similarities on average over 1000 face images in CelebA dataset~\cite{CelebA} and tower category validation set in LSUN dataset~\cite{yu2015lsun}.
The average simulation results are shown in Table.~\ref{tab: GAN-Inverse}.
Intuitively, from a typical case shown in Fig.~\ref{ID-GAN}, we can obtain that, transmitted with the ID-GAN-inv-based SC with/without image inverse attack, the evaluation of the image similarities may degrade a lot under the measurement of the \psnr, \ms and \lp. However, \ours shows that it keeps its semantic similarity to a large extent, which is consistent with visual perception.
Directly utilizing the pre-trained In-Domain GAN inversion model degrades the image quality, since it is not a tailored image SC model. For fairness, we compare the metric performance with the gap of the evaluation between with and without image inverse attack.
We can see that, the degradation of \ours is smaller than the other 3 metrics, indicating that \ours can better reflect the image semantic changes.
More strikingly, \ours outperforms the other 3 metrics in the cases where the distribution space of the reconstructed images is similar to that of the original ones, such as reconstruction with the image inverse transform.

\begin{figure*}[htbp]
  \centering
  \subfloat[Original image.]{
  \includegraphics[width=0.25\textwidth]{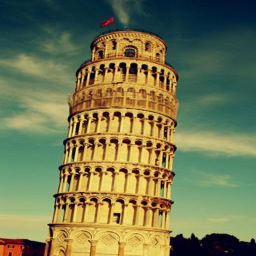}
  }
  \quad
  \subfloat[Reconstructed image with GANs-based SC system model.]{
  \includegraphics[width=0.25\textwidth]{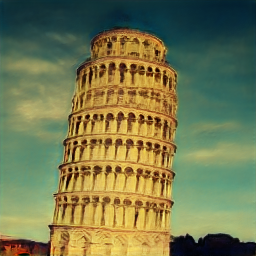}
  }
  \quad
  \subfloat[Reconstructed image with GANs-based SC system model under semantic noise interference.]{
  \includegraphics[width=0.25\textwidth]{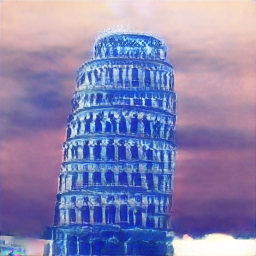}
  }
  \caption{Comparison of the image semantic similarity evaluation with 4 metrics.
  Transmitting an image (a) with the ID-GAN-inv-based SC model, the reconstruction performance (b) is evaluated: \psnr = 15.19, \ms = 0.69, \lp = 0.09, \ours = 0.87.
  While the semantic noise is introduced, which is characterized as the image inverse with GANs, the reconstruction performance (c) is evaluated: \psnr = 3.12, \ms = 0.00, \lp = 0.38, \ours= 0.73.
  The fluctuation in terms of the other 3 metrics is much larger than that of \ours. However, the semantic similarities between (a) and (b) (resp. (a) and (c)) are very close.}
  \label{ID-GAN}
\end{figure*}

\begin{table*}[htb]
    \renewcommand{\arraystretch}{1.5}
    \centering
    \caption{\label{tab: GAN-Inverse} Reconstruction performance evaluation with 4 metrics in average over 2 datasets for GANs-based SC model with and without semantic noise, which is characterized as the image inverse by generating with GANs.}
  \begin{tabular}{|c|c|c|c|c|}
  \hline
  \multirow{2}{*}{Metrics} & \multicolumn{2}{c|}{1K face images in CelebA dataset}                                                                 & \multicolumn{2}{c|}{Tower category validation set in LSUN dataset}                                                                \\ \cline{2-5} 
  & \multicolumn{1}{c|}{ID-GAN-inv-based SC} & \multicolumn{1}{c|}{\begin{tabular}[c]{@{}c@{}}ID-GAN-inv-based SC \\ with image inverse attack\end{tabular}} & \multicolumn{1}{c|}{ID-GAN-inv-based SC} & \multicolumn{1}{c|}{\begin{tabular}[c]{@{}c@{}}ID-GAN-inv-based SC \\ with image inverse attack\end{tabular}} \\ \hline
  \multicolumn{1}{|c|}{\ours}  & \multicolumn{1}{c|}{0.7744}  & \multicolumn{1}{c|}{0.6190} & \multicolumn{1}{c|}{0.7237}  & \multicolumn{1}{c|}{0.5586}\\ 
  \hline
  \multicolumn{1}{|c|}{\psnr}  & \multicolumn{1}{c|}{18.4231}  & \multicolumn{1}{c|}{0.9665} & \multicolumn{1}{c|}{15.6117} & \multicolumn{1}{c|}{1.6179} \\ 
  \hline
  \multicolumn{1}{|c|}{\ms}  & \multicolumn{1}{c|}{0.8200}  & \multicolumn{1}{c|}{0.0012} & \multicolumn{1}{c|}{0.6832} & \multicolumn{1}{c|}{0.0017} \\ 
  \hline
  \multicolumn{1}{|c|}{\lp}  & \multicolumn{1}{c|}{0.1108}  & \multicolumn{1}{c|}{0.4580} & \multicolumn{1}{c|}{0.1966} & \multicolumn{1}{c|}{0.5008} \\ 
  \hline
\end{tabular}
\end{table*}

\textit{(3) Semantic Attack with Typical Image Transforms}

Utilizing the GAN inversion method, a variance of image semantic attack can be implemented in the GANs-based SC model.
For simplicity, we use the image-processing cases instead of training more GANs models to simulate the image SC with the semantic attack.
We take for example 7 typical image-processing cases to show that the proposed \ours evaluates semantic similarities better than 3 typical categories of metrics~(\psnr, \ms, and \lp), consisting of an image versus its inverse (obtained by 1 minus all pixels of the original image), its gray-scale version (obtained using Grayscale-operation by torch-vision), its flipping (vertical flipping and horizontal flipping), its rotation (180-degree rotation and 90-degree rotation) and versus random noise (uniformly generated in [0,1]).
The overall results are shown in Fig. \ref{plt:metrics}. We can find that with \psnr and \ms evaluation, the scores in most cases are similar to the random noise case, except the gray-scale case. However, with \ours and \lp evaluation, the scores in all other cases are higher than the random noise case by large margins.
The results show that \ours has excellent performance for image semantic similarity compared with the other 3 typical metrics (\psnr, \ms, and \lp). Processed by basic transforms, an image may keep its semantic information by a large margin. \ours between the original and transformed images may stay at a high level, while the \psnr, \ms, and \lp between them state a considerably low level. Recall that the higher the \lp score, the lower the similarity of images. As the \ours measures the image similarity at the global semantic level, instead of the pixel or structural or local perceptual level.
\looseness=-1

\begin{figure}[htbp]
  \centering
  \subfloat[\psnr.]{
  \includegraphics[width=0.21\textwidth]{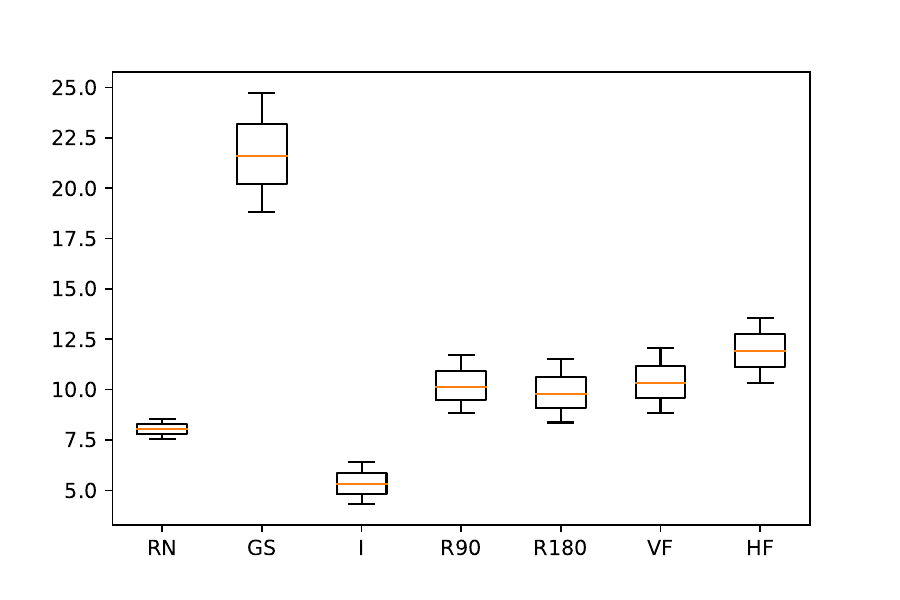}
  }
  \quad
  \subfloat[\ms.]{
  \includegraphics[width=0.21\textwidth]{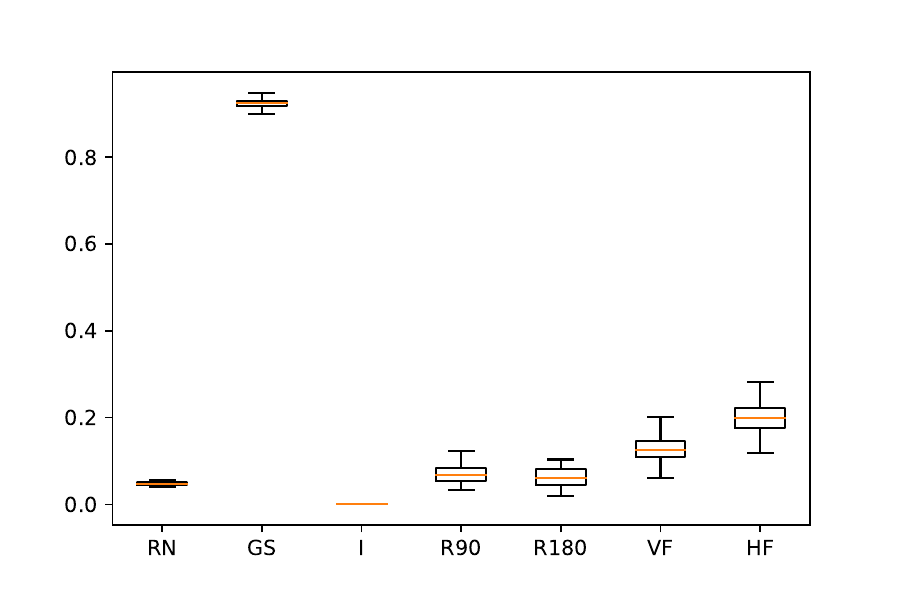}
  }
  \quad
  \subfloat[\lp.]{
  \includegraphics[width=0.21\textwidth]{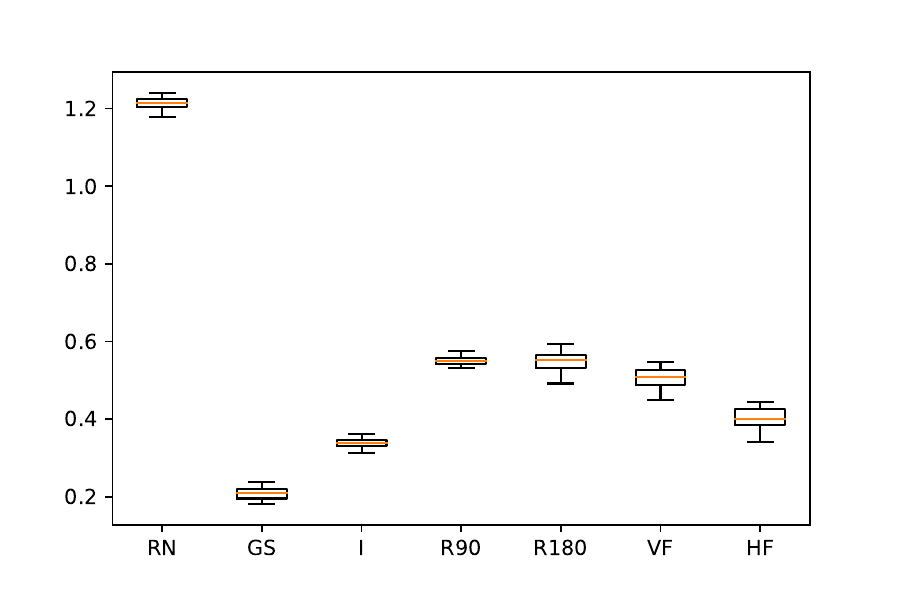}
  }
  \quad
  \subfloat[\ours.]{
  \includegraphics[width=0.21\textwidth]{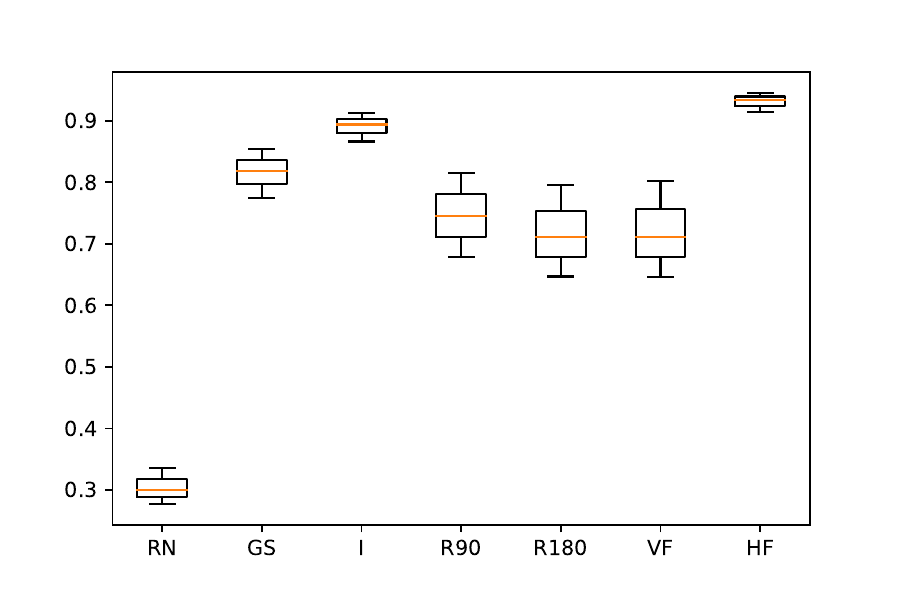}
  }
  \caption{The average \psnr, \ms, \lp and \ours evaluation of an image versus random noise (RN), its gray-scale version (GS), its inverse (I), its 90-degree rotation (R90), its 180-degree rotation (R180), its vertical flipping (VF), and its horizontal flipping (HF) over DIV2K dataset.
  Except for the gray-scale case, the \psnr and \ms evaluations in most cases are similar to the random noise case.
  Whereas, the \lp and \ours evaluations in all other cases are lower than the random noise case by large margins.}
  \label{plt:metrics}
\end{figure}

Furthermore, we study the universality of the \ours metric over 6 typical image datasets (DIV2K, Set5, Set14, B100, Urban100 and COCO).
Table~\ref{universality} shows the experimental results in terms of \ours and 3 typical categories of metrics (\psnr, \ms, and \lp). In this table, $\overline{r}$ is the mean of raw scores, and $\overline{s}$ is the mean of standard scores. Standard scores are calculated by the following equation:
\begin{equation}
    \label{eq:standard_score}
    s_{(\mathrm{Dataset}, \mathrm{Metric})}(r) = \mathrm{sign}_\mathrm{Metric}\frac{r - \mu_{(\mathrm{Dataset}, \mathrm{Metric})}}{\sigma_{(\mathrm{Dataset}, \mathrm{Metric})}},
\end{equation}

\noindent where $\mu_{(\mathrm{Dataset}, \mathrm{Metric})}$ and $\sigma_{(\mathrm{Dataset}, \mathrm{Metric})}$ are mean value and standard deviation of evaluation scores of any two images in the given dataset with the given metric. $\mathrm{sign}_\mathrm{Metric}=-1$ if the metric is LPIPS (since it gives smaller scores for similar images), otherwise $\mathrm{sign}_\mathrm{Metric}=1$.
We can observe that the \ours metric is universal for image measurement over all the above-mentioned datasets. It reaches the best or the second-best performance in most experiments except the low-resolution experiment and the gray-scale experiment.

\begin{table*}[htbp]
  \renewcommand\arraystretch{1.5}
  \caption{\label{universality} The average \ours performance compared with PSNR, MS-SSIM, and LPIPS over the image datasets DIV2K, Set5, Set14, B100, Urban100 and COCO with 6 experiments. The best (lowest in random noise and highest in others) results are in \textbf{bold}, and the second-best results are with \underline{underline}. Definitions of $\overline{r}$ and $\overline{s}$ are shown in Equation (\ref{eq:standard_score}).}
  \setlength{\tabcolsep}{4pt}{
    \centering  
    \scalebox{1.2}{
  \begin{tabular}{|c|c|cccccccccccc|}
  \hline
  \multirow{3}{*}{Experiment} & \multirow{3}{*}{Metric} & \multicolumn{12}{c|}{Dataset}                                                                                                 \\ \cline{3-14} 
                              &                         & \multicolumn{2}{c|}{DIV2K} & \multicolumn{2}{c|}{Set5}  & \multicolumn{2}{c|}{Set14} & \multicolumn{2}{c|}{B100} & \multicolumn{2}{c|}{Urban100} & \multicolumn{2}{c|}{COCO} \\
                              \cline{3-14} 
                              &                         & \multicolumn{1}{c|}{$\overline{r}$} & \multicolumn{1}{c|}{$\overline{s}$} & \multicolumn{1}{c|}{$\overline{r}$} & \multicolumn{1}{c|}{$\overline{s}$} & \multicolumn{1}{c|}{$\overline{r}$} & \multicolumn{1}{c|}{$\overline{s}$} & \multicolumn{1}{c|}{$\overline{r}$} & \multicolumn{1}{c|}{$\overline{s}$} & \multicolumn{1}{c|}{$\overline{r}$} & \multicolumn{1}{c|}{$\overline{s}$} & \multicolumn{1}{c|}{$\overline{r}$} & \multicolumn{1}{c|}{$\overline{s}$} \\
                              \hline \hline 
                              \multirow{2}{*}{\shortstack{An image\\versus its\\inverse}}          & \ours                       & \multicolumn{1}{c|}{0.88} & \multicolumn{1}{c|}{\textbf{6.42}}  & \multicolumn{1}{c|}{0.92} & \multicolumn{1}{c|}{\textbf{6.61}}  & \multicolumn{1}{c|}{0.86} & \multicolumn{1}{c|}{\textbf{7.93}}  & \multicolumn{1}{c|}{0.89} & \multicolumn{1}{c|}{\textbf{5.61}}  & \multicolumn{1}{c|}{0.87} & \multicolumn{1}{c|}{\underline{4.83}} & \multicolumn{1}{c|}{0.88} & \multicolumn{1}{c|}{\textbf{7.15}} \\ \cline{2-14} 
                              & PSNR(dB)                       & \multicolumn{1}{c|}{5.14} & \multicolumn{1}{c|}{-2.37}  & \multicolumn{1}{c|}{4.22} & \multicolumn{1}{c|}{-4.13}  & \multicolumn{1}{c|}{5.45} & \multicolumn{1}{c|}{-2.09}  & \multicolumn{1}{c|}{6.01} & \multicolumn{1}{c|}{-1.88} & \multicolumn{1}{c|}{5.42} & \multicolumn{1}{c|}{-2.70}  & \multicolumn{1}{c|}{5.09} & \multicolumn{1}{c|}{-2.21}   \\ \cline{2-14} 
  & MS-SSIM                       & \multicolumn{1}{c|}{0.00} & \multicolumn{1}{c|}{-0.97} & \multicolumn{1}{c|}{0.00} & \multicolumn{1}{c|}{-1.42} & \multicolumn{1}{c|}{0.00} & \multicolumn{1}{c|}{-0.99} & \multicolumn{1}{c|}{0.00} & \multicolumn{1}{c|}{-1.03} & \multicolumn{1}{c|}{0.00} & \multicolumn{1}{c|}{-1.00} & 
  \multicolumn{1}{c|}{0.00} & \multicolumn{1}{c|}{-0.91} \\ \cline{2-14} 
                              & LPIPS                       & \multicolumn{1}{c|}{0.34} & \multicolumn{1}{c|}{\underline{5.04}} & \multicolumn{1}{c|}{0.40} & \multicolumn{1}{c|}{\underline{4.63}} & \multicolumn{1}{c|}{0.35} & \multicolumn{1}{c|}{\underline{5.74}} & \multicolumn{1}{c|}{0.33} & \multicolumn{1}{c|}{\underline{5.02}} & \multicolumn{1}{c|}{0.25} & \multicolumn{1}{c|}{\textbf{6.45}} & 
                              \multicolumn{1}{c|}{0.34} & \multicolumn{1}{c|}{\underline{6.13}}  \\ \hline \hline
  \multirow{2}{*}{\shortstack{An image\\versus its\\90-degree\\rotation}}          & \ours                       & \multicolumn{1}{c|}{0.74} & \multicolumn{1}{c|}{\textbf{4.74}}  & \multicolumn{1}{c|}{0.82} & \multicolumn{1}{c|}{\textbf{5.51}}  & \multicolumn{1}{c|}{0.74} & \multicolumn{1}{c|}{\textbf{6.21}}  & \multicolumn{1}{c|}{0.75} & \multicolumn{1}{c|}{\textbf{4.04}}  & \multicolumn{1}{c|}{0.72} & \multicolumn{1}{c|}{\textbf{2.97}} & \multicolumn{1}{c|}{0.72} & \multicolumn{1}{c|}{\textbf{5.03}}     \\ \cline{2-14} 
                              & PSNR(dB)                       & \multicolumn{1}{c|}{9.93} & \multicolumn{1}{c|}{0.68}  & \multicolumn{1}{c|}{8.88} & \multicolumn{1}{c|}{0.79}  & \multicolumn{1}{c|}{9.99} & \multicolumn{1}{c|}{0.60}  & \multicolumn{1}{c|}{10.76} & \multicolumn{1}{c|}{0.32} & \multicolumn{1}{c|}{9.44} & \multicolumn{1}{c|}{0.44} & \multicolumn{1}{c|}{9.54} & \multicolumn{1}{c|}{0.54} \\ \cline{2-14} 
  & MS-SSIM                       & \multicolumn{1}{c|}{0.07} & \multicolumn{1}{c|}{0.27} & \multicolumn{1}{c|}{0.07} & \multicolumn{1}{c|}{0.88} & \multicolumn{1}{c|}{0.04} & \multicolumn{1}{c|}{-0.07} & \multicolumn{1}{c|}{0.08} & \multicolumn{1}{c|}{0.10} & \multicolumn{1}{c|}{0.05} & \multicolumn{1}{c|}{0.03} & 
  \multicolumn{1}{c|}{0.08} & \multicolumn{1}{c|}{0.26}     \\ \cline{2-14} 
                              & LPIPS                       & \multicolumn{1}{c|}{0.55} & \multicolumn{1}{c|}{\underline{1.61}} & \multicolumn{1}{c|}{0.51} & \multicolumn{1}{c|}{\underline{2.06}} & \multicolumn{1}{c|}{0.56} & \multicolumn{1}{c|}{\underline{2.10}} & \multicolumn{1}{c|}{0.54} & \multicolumn{1}{c|}{\underline{1.47}} & \multicolumn{1}{c|}{0.57} & \multicolumn{1}{c|}{\underline{1.44}} & 
                              \multicolumn{1}{c|}{0.57} & \multicolumn{1}{c|}{\underline{1.53}}  \\ \hline \hline
  \multirow{2}{*}{\shortstack{An image\\versus its\\180-degree\\rotation}}          & \ours                       & \multicolumn{1}{c|}{0.72} & \multicolumn{1}{c|}{\textbf{4.43}}  & \multicolumn{1}{c|}{0.78} & \multicolumn{1}{c|}{\textbf{5.01}}  & \multicolumn{1}{c|}{0.72} & \multicolumn{1}{c|}{\textbf{6.00}}  & \multicolumn{1}{c|}{0.72} & \multicolumn{1}{c|}{\textbf{3.66}} & \multicolumn{1}{c|}{0.72} & \multicolumn{1}{c|}{\textbf{2.93}} & \multicolumn{1}{c|}{0.67} & \multicolumn{1}{c|}{\textbf{4.46}}     \\ \cline{2-14} 
                              & PSNR(dB)                       & \multicolumn{1}{c|}{9.38} & \multicolumn{1}{c|}{0.33}  & \multicolumn{1}{c|}{9.39} & \multicolumn{1}{c|}{1.33}  & \multicolumn{1}{c|}{9.41} & \multicolumn{1}{c|}{0.26}  & \multicolumn{1}{c|}{10.24} & \multicolumn{1}{c|}{0.08}  & \multicolumn{1}{c|}{9.04} & \multicolumn{1}{c|}{0.13} & \multicolumn{1}{c|}{8.99} & \multicolumn{1}{c|}{0.20}     \\ \cline{2-14} 
  & MS-SSIM                       & \multicolumn{1}{c|}{0.06} & \multicolumn{1}{c|}{0.12} & \multicolumn{1}{c|}{0.06} & \multicolumn{1}{c|}{0.45} & \multicolumn{1}{c|}{0.03} & \multicolumn{1}{c|}{-0.21} & \multicolumn{1}{c|}{0.07} & \multicolumn{1}{c|}{-0.02} & \multicolumn{1}{c|}{0.05} & \multicolumn{1}{c|}{-0.02} & 
  \multicolumn{1}{c|}{0.06} & \multicolumn{1}{c|}{0.05}     \\ \cline{2-14} 
                              & LPIPS                       & \multicolumn{1}{c|}{0.55} & \multicolumn{1}{c|}{\underline{1.61}} & \multicolumn{1}{c|}{0.51} & \multicolumn{1}{c|}{\underline{2.19}} & \multicolumn{1}{c|}{0.56} & \multicolumn{1}{c|}{\underline{2.13}} & \multicolumn{1}{c|}{0.54} & \multicolumn{1}{c|}{\underline{1.46}} & \multicolumn{1}{c|}{0.55} & \multicolumn{1}{c|}{\underline{1.70}} & 
                              \multicolumn{1}{c|}{0.57} & \multicolumn{1}{c|}{\underline{1.51}}  \\ \hline \hline
   \multirow{2}{*}{\shortstack{An image\\versus\\random\\noise}}          & \ours                       & \multicolumn{1}{c|}{0.31} & \multicolumn{1}{c|}{\underline{-0.59}}  & \multicolumn{1}{c|}{0.32} & \multicolumn{1}{c|}{-0.36}  & \multicolumn{1}{c|}{0.29} & \multicolumn{1}{c|}{-0.01}  & \multicolumn{1}{c|}{0.31} & \multicolumn{1}{c|}{\underline{-0.92}} & \multicolumn{1}{c|}{0.27} & \multicolumn{1}{c|}{\underline{-2.62}} & \multicolumn{1}{c|}{0.30} & \multicolumn{1}{c|}{-0.36}     \\ \cline{2-14} 
                              & PSNR(dB)                       & \multicolumn{1}{c|}{7.96} & \multicolumn{1}{c|}{-0.58}  & \multicolumn{1}{c|}{7.49} & \multicolumn{1}{c|}{\underline{-0.66}}  & \multicolumn{1}{c|}{8.11} & \multicolumn{1}{c|}{\underline{-0.52}}  & \multicolumn{1}{c|}{8.34} & \multicolumn{1}{c|}{-0.80} & \multicolumn{1}{c|}{8.09} & \multicolumn{1}{c|}{-0.62} & \multicolumn{1}{c|}{7.93} & \multicolumn{1}{c|}{\underline{-0.46}}     \\ \cline{2-14} 
  & MS-SSIM                       & \multicolumn{1}{c|}{0.05} & \multicolumn{1}{c|}{-0.13} & \multicolumn{1}{c|}{0.04} & \multicolumn{1}{c|}{0.04} & \multicolumn{1}{c|}{0.04} & \multicolumn{1}{c|}{0.10} & \multicolumn{1}{c|}{0.06} & \multicolumn{1}{c|}{-0.21} & \multicolumn{1}{c|}{0.05} & \multicolumn{1}{c|}{0.05} & 
  \multicolumn{1}{c|}{0.05} & \multicolumn{1}{c|}{-0.16}     \\ \cline{2-14} 
                              & LPIPS                       & \multicolumn{1}{c|}{1.21} & \multicolumn{1}{c|}{\textbf{-9.43}} & \multicolumn{1}{c|}{1.25} & \multicolumn{1}{c|}{\textbf{-14.49}} & \multicolumn{1}{c|}{1.25} & \multicolumn{1}{c|}{\textbf{-9.35}} & \multicolumn{1}{c|}{1.19} & \multicolumn{1}{c|}{\textbf{-9.60}} & \multicolumn{1}{c|}{1.21} & \multicolumn{1}{c|}{\textbf{-8.66}} & 
                              \multicolumn{1}{c|}{1.24} & \multicolumn{1}{c|}{\textbf{-11.30}}  \\ \hline  \hline
  \multirow{2}{*}{\shortstack{An image\\versus its\\low-resolution\\version}}          & \ours                       & \multicolumn{1}{c|}{0.63} & \multicolumn{1}{c|}{3.30}  & \multicolumn{1}{c|}{0.77} & \multicolumn{1}{c|}{4.91}  & \multicolumn{1}{c|}{0.63} & \multicolumn{1}{c|}{4.69}  & \multicolumn{1}{c|}{0.63} & \multicolumn{1}{c|}{2.68} & \multicolumn{1}{c|}{0.59} & \multicolumn{1}{c|}{1.34} & \multicolumn{1}{c|}{0.61} & \multicolumn{1}{c|}{3.64}     \\ \cline{2-14} 
                              & PSNR(dB)                       & \multicolumn{1}{c|}{20.00} & \multicolumn{1}{c|}{\underline{7.09}} & \multicolumn{1}{c|}{20.34} & \multicolumn{1}{c|}{\underline{12.89}} & \multicolumn{1}{c|}{19.73} & \multicolumn{1}{c|}{\underline{6.38}} & \multicolumn{1}{c|}{21.61} & \multicolumn{1}{c|}{\underline{5.37}} & \multicolumn{1}{c|}{17.73} & \multicolumn{1}{c|}{\underline{6.90}} & \multicolumn{1}{c|}{20.74} & \multicolumn{1}{c|}{\underline{7.44}}    \\ \cline{2-14} 
  & MS-SSIM                       & \multicolumn{1}{c|}{0.75} & \multicolumn{1}{c|}{\textbf{12.17}} & \multicolumn{1}{c|}{0.80} & \multicolumn{1}{c|}{\textbf{25.61}} & \multicolumn{1}{c|}{0.76} & \multicolumn{1}{c|}{\textbf{17.57}} & \multicolumn{1}{c|}{0.77} & \multicolumn{1}{c|}{\textbf{10.16}} & \multicolumn{1}{c|}{0.67} & \multicolumn{1}{c|}{\textbf{13.13}} & 
  \multicolumn{1}{c|}{0.79} & \multicolumn{1}{c|}{\textbf{11.47}}     \\ \cline{2-14} 
                              & LPIPS                       & \multicolumn{1}{c|}{0.61} & \multicolumn{1}{c|}{0.60} & \multicolumn{1}{c|}{0.52} & \multicolumn{1}{c|}{1.88} & \multicolumn{1}{c|}{0.61} & \multicolumn{1}{c|}{1.34} & \multicolumn{1}{c|}{0.61} & \multicolumn{1}{c|}{0.31} & \multicolumn{1}{c|}{0.65} & \multicolumn{1}{c|}{0.11} & 
                              \multicolumn{1}{c|}{0.58} & \multicolumn{1}{c|}{1.37}  \\ \hline \hline
  \multirow{2}{*}{\makecell[c]{An image\\versus its\\gray-scale\\version}}          & \ours                       & \multicolumn{1}{c|}{0.81} & \multicolumn{1}{c|}{5.54} & \multicolumn{1}{c|}{0.88} & \multicolumn{1}{c|}{6.15}  & \multicolumn{1}{c|}{0.84} & \multicolumn{1}{c|}{7.71}  & \multicolumn{1}{c|}{0.83} & \multicolumn{1}{c|}{4.94} & \multicolumn{1}{c|}{0.86} & \multicolumn{1}{c|}{4.70} & \multicolumn{1}{c|}{0.82} & \multicolumn{1}{c|}{6.32}     \\ \cline{2-14} 
  & PSNR(dB)                       & \multicolumn{1}{c|}{19.96} & \multicolumn{1}{c|}{7.07} & \multicolumn{1}{c|}{17.67} & \multicolumn{1}{c|}{\underline{10.07}} & \multicolumn{1}{c|}{18.59} & \multicolumn{1}{c|}{5.71} & \multicolumn{1}{c|}{22.51} & \multicolumn{1}{c|}{5.79} & \multicolumn{1}{c|}{21.15} & \multicolumn{1}{c|}{\underline{9.57}} & 
  \multicolumn{1}{c|}{20.99} & \multicolumn{1}{c|}{7.60}             \\ \cline{2-14} 
  & MS-SSIM                       & \multicolumn{1}{c|}{0.93} & \multicolumn{1}{c|}{\textbf{15.25}} & \multicolumn{1}{c|}{0.90} & \multicolumn{1}{c|}{\textbf{29.00}} & \multicolumn{1}{c|}{0.89} & \multicolumn{1}{c|}{\textbf{20.95}} & \multicolumn{1}{c|}{0.95} & \multicolumn{1}{c|}{\textbf{12.75}} & \multicolumn{1}{c|}{0.94} & \multicolumn{1}{c|}{\textbf{18.91}} & 
  \multicolumn{1}{c|}{0.94} & \multicolumn{1}{c|}{\textbf{13.81}}     \\ \cline{2-14} 
                              & LPIPS                       & \multicolumn{1}{c|}{0.21} & \multicolumn{1}{c|}{\underline{7.23}} & \multicolumn{1}{c|}{0.22} & \multicolumn{1}{c|}{8.61} & \multicolumn{1}{c|}{0.20} & \multicolumn{1}{c|}{\underline{8.19}} & \multicolumn{1}{c|}{0.18} & \multicolumn{1}{c|}{\underline{7.63}} & \multicolumn{1}{c|}{0.12} & \multicolumn{1}{c|}{8.46} & 
                              \multicolumn{1}{c|}{0.17} & \multicolumn{1}{c|}{\underline{9.35}} \\ \hline 
  \end{tabular}}}
\end{table*}

\begin{itemize}
  \item \ours reaches the best performance in the 3 classes experiments (An image versus its inverse, its 90-degree rotation and its 180-degree rotation), indicating that \ours is more robust in measuring the semantic similarity when the image changes in structural flipping than the other 3 metrics.
  \item \ours achieves the second-best performance on most datasets in the experiment: an image versus random noise, while \lp performs the best. Since the \lp uses the mean pooling of feature distances from corresponding locations, it fails to precisely measure the global semantics of images but is sensitive to random noise.
  \item In the low-resolution and gray-scale experiments, \ms performs the best. It illustrates that \ms well performs in measuring the luminance, contrast, and structure of images according to its definition. On the other hand, its standard scores are higher than the other 3 metrics' scores due to the fact the standard deviation of \ms scores of any two images in a given dataset is extremely low.
\end{itemize}

In summary, we can conclude that \ours outperforms the \psnr, \ms, and \lp for the image similarity measurement at the global semantic level.
\ours shows stable performance in the same type of experiments over all testing datasets, suggesting that \ours is significantly efficient to characterize semantic aspects of images.
\looseness=-1

\subsection{Ablation Study Results}
\label{sec:ablation}

Table~\ref{tab:abalation} shows the results of the ablation study over the COCO dataset. For different forms of \ours have different distributions, we convert raw scores into standard scores by Equation (\ref{eq:standard_score}). We can find that, compared to the mean pooling version, the original \ours achieves better performance in most experiments, except in the random noise experiment. This is due to the tolerance of max pooling. Overall, the origin \ours can better reflect the semantic similarity.
\looseness=-1

\begin{table}[htbp]
  \renewcommand{\arraystretch}{1.5}
  \centering
  \caption{\label{tab:abalation}Ablation study results. Raw scores are converted into standard scores. The best results are shown in \textbf{bold}.}
  \begin{tabular}{|c|c|c|}
      \hline
       Experiment & $\ours_\mathrm{Origin}$ &  $\ours_\mathrm{Mean}$ \\
       \hline \hline
       GS & \textbf{6.32}  & 4.78 \\
       \hline
       I & \textbf{7.15}  & 5.10 \\
       \hline
       R90 & \textbf{5.03}  & 4.14\\
       \hline
       R180 & \textbf{4.46}  & 3.66 \\
       \hline
       RN & -0.36  & \textbf{-0.83} \\
       \hline
       LR & \textbf{3.64}  & 3.18 \\
       \hline
  \end{tabular}
\end{table}

\section{Conclusion and Discussion}

In this paper, we have proposed \ours, a novel semantic similarity evaluation metric for images based on the pre-trained image model ViT. 
To the authors' knowledge, \ours is the first practical image semantic similarity metric for SC.
We define the \ours and prove theoretically that, \ours has the properties of {\emph{symmetry, boundedness,}} and {\emph{normalization}}.
These important properties make \ours convenient and intuitive for implementation in image measurement.
Thanks to exploiting the attention mechanism of pre-trained models, \ours can automatically learn and extract essential semantics of the image.
Therefore, \ours exhibits an excellent ability to measure the semantics of images.
Furthermore, we evaluate the performance of \ours through 4 classes of experiments: (i) correlation with BERTScore through evaluation of image caption downstream CV task, (ii) evaluation in classical image communications, (iii) evaluation in image semantic communication~(including 4 semantic communication models~(NTSCC~\cite{dai2022nonlinear}, DeepJSCC-V~\cite{zhang2023predictive}, ADJSCC~\cite{Jialong} and DeepJSCC~\cite{Bourtsoulatze})), and (iv) evaluation in image semantic communication systems with semantic attack~(including 2 categories of semantic attack: semantic attack to mislead image classification task and semantic attack with image transforms). Experimental results demonstrate that ViTScore is robust and well-performed in evaluating the semantic similarity of images. Particularly, ViTScore outperforms the other 3 typical metrics in evaluating the image semantic changes by semantic attack, such as image inverse with GANs, indicating that ViTScore is an effective performance metric when deployed in SC scenarios. 
In addition, we provide an ablation study of ViTScore to demonstrate the structure of the ViTScore metric is valid.
Accordingly, we believe that \ours has a promising performance in the case when deployed in SC scenarios.
\looseness=-1

Currently, this work is just a first step towards SC for image semantic similarity evaluation. Advances in the foundation models, such as GPT-4 and Sora, and their applications in future communications can bring further performance improvements in image transmission. 
The metric \ours proposed in this paper can be easily extended along with foundation models.
Furthermore, a general framework design of SC for image and video will be our future work.
\looseness=-1

\section*{Acknowledgments}

\noindent This research was supported by the National Key R\&D Program of China~(No. 2021YFA1000500), the NSF of China~(No. 62276284), the Guangdong Basic and Applied Basic Research Foundation~(No.  2022A1515011355), Guangzhou Science and Technology Project~(No. 202201011699), Guizhou Provincial Science and Technology Projects~(No. 2022-259), Humanities and Social Science Research Project of Ministry of Education~(No. 18YJCZH006).
The authors would like to express their sincere gratitude to Prof. Xiao Ma for the valuable discussion and to anonymous reviewers for their critical comments on an earlier version of this paper that led to a much-improved version.

\bibliographystyle{IEEEtran}
\bibliography{ViTScore}

\end{document}